\definecolor{mydarkblue}{rgb}{0,0.08,0.45}
\newtheorem{theorem}{Theorem}
\newtheorem{corollary}{Corollary}
\newtheorem{assumption}{Assumption}
\newtheorem{definition}{Definition}
\newtheorem{remark}{Remark}
\DeclareMathOperator*{\argmin}{arg\,min}
\DeclareMathOperator{\Tr}{tr}
\DeclareMathOperator{\grad}{grad}
\DeclareMathOperator{\relu}{ReLU}
\newcommand*{\T}{\mathsf{T}}
\newcommand*{\PA}{\mathsf{PA}}
\newcommand{\MLP}{\mathsf{MLP}}
\def\th@plain{%
  \thm@notefont{}
  \itshape 
}
\def\th@definition{%
  \thm@notefont{}
  \normalfont 
}
\begin{document}

%

%
\runningauthor{Ignavier Ng, Sébastien Lachapelle, Nan Rosemary Ke, Simon Lacoste-Julien, Kun Zhang}

\twocolumn[

\aistatstitle{On the Convergence of Continuous Constrained Optimization for Structure Learning}

\aistatsauthor{ Ignavier Ng$^1$, Sébastien Lachapelle$^{2}$, Nan Rosemary Ke$^{3}$, Simon Lacoste-Julien$^{2,4}$, Kun Zhang$^{1,5}$ }

\aistatsaddress{
$^1$\,Carnegie Mellon University\\
$^2$\,Mila, Université de Montréal\\
$^3$\,DeepMind\\
$^4$\,Canada CIFAR AI Chair\\
$^5$\,Mohamed bin Zayed University of Artificial Intelligence
} 
]

\begin{abstract}
Recently, structure learning of directed acyclic graphs (DAGs) has been formulated as a continuous optimization problem by leveraging an algebraic characterization of acyclicity. The constrained problem is solved using the augmented Lagrangian method (ALM) which is often preferred to the quadratic penalty method (QPM) by virtue of its standard convergence result that does not require the penalty coefficient to go to infinity, hence avoiding ill-conditioning. However, the convergence properties of these methods for structure learning, including whether they are guaranteed to return a DAG solution, remain unclear, which might limit their practical applications. In this work, we examine the convergence of ALM and QPM for structure learning in the linear, nonlinear, and confounded cases. We show that the standard convergence result of ALM does not hold in these settings, and demonstrate empirically that its behavior is akin to that of the QPM which is prone to ill-conditioning. We further establish the convergence guarantee of QPM to a DAG solution, under mild conditions. Lastly, we connect our theoretical results with existing approaches to help resolve the convergence issue, and verify our findings in light of an empirical comparison of them.
\end{abstract}
\section{Introduction}\label{sec:intro}
Structure learning of directed acyclic graphs (DAGs) is a fundamental problem in many scientific endeavors, such as biology \citep{Sachs2005causal} and economics \citep{Koller09probabilistic}. Traditionally, score-based structure learning methods cast the problem into a discrete optimization program using a predefined score function. Most of these methods, such as GES \citep{Chickering2002optimal}, involve local heuristics owing to the large search space of graphs \citep{Chickering1996learning}.

A recent work by \citet{Zheng2018notears} has reformulated score-based learning of linear DAGs as a continuous constrained optimization problem. At the heart of the method is an algebraic characterization of acyclicity expressed as a nonlinear constraint and used to minimize the least squares loss while enforcing acyclicity. In the context of structure learning, various works have adopted this continuous constrained formulation to support linear non-Gaussian models \citep{Zheng2020thesis}, nonlinear models \citep{Yu19daggnn, Ng2019graph, Lachapelle2020grandag, Zheng2020learning,Gao2021daggan,Ng2022masked,Geffner2022fcause}, time series \citep{Pamfil2020dynotears,Sun2021ntsnotears,Hsieh2021srvarm}, unobserved confounding \citep{Bhattacharya2020differentiable,Bellot2021deconfounded}, interventional data \citep{brouillard2020differentiable,Faria2022differentiable}, multi-domain data \citep{Zeng2020causal}, mixed data \citep{Zeng2022causal}, low rank DAGs \citep{Fang2020low}, incomplete data \citep{Wang2020causal}, prior knowledge \citep{Cai2021anoce}, federated learning \citep{Ng2022towards,Gao2021federated}, and multi-task learning \citep{Chen2021multitask}. The continuous constrained formulation has also been applied to other domains, e.g., reinforcement learning \citep{Pruthi2020structure,Ruan2022gcs}, normalizing flows \citep{Wehenkel2020graphical,Dai2022graph}, domain adaptation \citep{Yang2021learning}, recommendation system \citep{Wang2022sequential}, and computer vision \citep{Cui2020knowledge,Yang2021causalvae,Zhang2021acre,Zhang2022cmgan}.

Like in the original work, most of these extensions rely on the \emph{augmented Lagrangian method} (ALM) \citep{Bertsekas1982constrained, Bertsekas1999nonlinear} to solve the continuous constrained optimization problem. This choice of algorithm was originally motivated by the convergence result of ALM, which, unlike the classical \emph{quadratic penalty method} (QPM) \citep{Powell1969nonlinear, Fletcher1987practical}, does not require increasing the penalty coefficient to infinity \citep[Proposition~3]{Zheng2018notears}. Despite abundant extensions and applications of the continuous constrained formulation, it remains unclear whether the required conditions for the standard convergence result of ALM, or more specifically, the regularity conditions, are satisfied in these settings, and whether the continuous constrained formulation is guaranteed to converge to a DAG solution, which is a key to structure learning.

{\bf Contributions.} \ \ \ 
We examine the convergence properties of ALM and QPM for structure learning in the linear, nonlinear, and confounded cases. We conclude (i) that, unfortunately, the conditions behind standard convergence result of ALM are not satisfied in these settings, (ii) that, furthermore, the empirical behavior of ALM is similar to QPM that requires the penalty coefficient to go to infinity and is prone to ill-conditioning, and (iii) that, interestingly, QPM is guaranteed to converge to a DAG solution, under mild conditions. We then provide the implications of our theoretical results for existing approaches to help resolve the convergence issue, with an empirical comparison of them that verifies our findings and makes them more intuitive.

We note that the problem has received considerable attention. For instance, \citet{Wei2020nofears} studied a related problem, involving the regularity conditions of the continuous constrained formulation. It is worth mentioning that our contributions are different and more complete in terms of the technical development and results. Specifically, \citet{Wei2020nofears} focused on the Karush-Kuhn-Tucker (KKT) conditions, while our study focuses on the convergence of specific constrained optimization methods (i.e., ALM and QPM), which provides practical insight for solving the optimization problem. Furthermore, they focused on the linear case, while our results also apply to the nonlinear and confounded cases, under the same umbrella.

{\bf Organization of the paper.} \ \ \ 
We give an overview of score-based learning, continuous constrained formulation of structure learning, and the ALM in Section \ref{sec:background}. In Section \ref{sec:optimization}, we examine the convergence of ALM and QPM for structure learning in the linear and nonlinear cases. We extend the results to the confounded case in Section \ref{sec:with_confounding}, and connect them with different approaches to resolve the convergence issue in Section \ref{sec:resolve_convergence}. We provide empirical studies in Section \ref{sec:exp} to verify our results, and conclude our work in Section \ref{sec:conclusion}.
\section{Background}\label{sec:background}
We provide a brief review of score-based structure learning, the NOTEARS method \citep{Zheng2018notears} and the standard convergence result of ALM.

\subsection{Score-Based Structure Learning}
Structure learning refers to the problem of learning a graphical structure (in our case a DAG) from data. Given the random vector $X=(X_1, \dots, X_d)$ consisting of $d$ random variables, we assume that the corresponding design matrix $\mathbf{X} =[\mathbf{X}_1|\cdots|\mathbf{X}_d]\in \mathbb{R}^{n \times d}$ is generated from a joint distribution $P(X)$ (with a density $p(x)$) that is Markov with respect to the ground truth DAG $\mathcal{G}$ and can be factorized as $p(x) = \prod_{i=1}^d p_i(x_i|x_{\PA_i^\mathcal{G}})$, where $\PA_i^\mathcal{G}$ designates the set of parents of $X_i$ in the DAG $\mathcal{G}$. In general, the underlying DAG is only identifiable up to Markov equivalence under the faithfulness \citep{Spirtes2000causation} or the sparsest Markov representation assumption \citep{Raskutti2018learning}. Under certain assumptions on the data distribution, the DAG $\mathcal{G}$ is fully identifiable, such as the linear non-Gaussian model \citep{Shimizu2006lingam}, linear Gaussian model with equal noise variances \citep{Peters2013identifiability}, nonlinear additive noise model \citep{Hoyer2009nonlinear,Peters2014causal}, and post-nonlinear model \citep{Zhang2009identifiability}.

To recover the structure $\mathcal{G}$ or its Markov equivalence class, a major class of structure learning methods are the score-based methods that solves an optimization problem over the space of graphs using some goodness-of-fit measure with a sparsity regularization term. Some examples include GES \citep{Chickering2002optimal}, $\ell_0$-regularized likelihood \citep{Van2013ell_0}, integer linear programming \citep{pmlr-v9-jaakkola10a, cussens2012bayesian}, dynamic programming \citep{Koivisto2004exact,Ott2004finding,Singh2005finding}, and A* \citep{Yuan2013learning}. Most of these methods tackle the structure search problem in its natural discrete form.

\subsection{Continuous Constrained Optimization for Structure Learning}\label{sec:notears_background}
{\bf NOTEARS.} \ \ \ 
\citet{Zheng2018notears} proposed a continuous constrained formulation for score-based learning of linear DAGs. In particular, the linear DAG model is equivalently represented by the linear structural equation model (SEM)
\begin{equation}\label{eq:linear_sem}
X = W^\T X + N,
\end{equation}
where $W$ is the weighted adjacency matrix of a DAG, and $N$ is a noise vector characterized by the noise covariance matrix $\Omega$. We assume here that the elements of $N$ are mutually independent, and there is no unobserved confounder. Denoting by $\odot$ the element-wise product, and by $e^M$  the matrix exponential of a square matrix $M$, the authors have shown that $\Tr(e^{W\odot W}) - d = 0$ holds if and only if $W$ represents a DAG. The resulting continuous constrained optimization problem is
\begin{align*}
	\min_{W\in\mathbb{R}^{d\times d}}\ \ &\frac{1}{2n}\|\mathbf{X} -  \mathbf{X}W\|_2^2 + \lambda\|W\|_1\\
	\text{subject to}\ \ &\Tr(e^{W\odot W}) - d = 0,
\end{align*}
where $\|\cdot\|_2$ and $\|\cdot\|_1$ denote the element-wise $\ell_2$ and $\ell_1$ norms, respectively, and $\lambda$ is the regularization coefficient. Here, $(1/2n)\|\mathbf{X} -  \mathbf{X}W\|_2^2$ is the least squares objective and is equal, up to a constant, to the log-likelihood of linear Gaussian DAGs assuming equal noise variances. The $\ell_1$ regularization term $\|W\|_1$ is useful for enforcing sparsity on the matrix $W$.

{\bf NOTEARS-MLP.} \ \ \ 
To generalize the above formulation to the nonlinear case, \citet{Zheng2020learning} used multi-layer perceptrons (MLPs) to model nonlinear relationships. For each variable $X_i$, let $\MLP(u; A_i)$ be the corresponding MLP with input row vector $u$, $\ell$ layers, weights $A_i=(A_i^{(1)}, \dots, A_i^{(\ell)})$, and element-wise activation function $\sigma(\cdot)$, defined as follows:
\begin{equation*}
\begin{gathered}
    \MLP(u; A_i) = \sigma(\cdots \sigma(\sigma(uA_i^{(1)})A_i^{(2)})\cdots)A_i^{(\ell)},\\
    A_i^{(t)}\in\mathbb{R}^{s_{t-1}\times s_{t}}, \quad s_0=d, \ s_\ell=1.
\end{gathered}
\end{equation*}
Let $A=(A_1,\dots,A_d)$ denote the weights of MLPs corresponding to all variables. The authors defined an equivalent adjacency matrix $(B(A))_{ji}= \| j\text{th-row}(A_{i}^{(1)} ) \|_2$ and proposed to solve the optimization problem
\begin{align*}
	\min_{A}\ \ &\frac{1}{n}\sum_{i=1}^{d}\|\mathbf{X}_i - \MLP(\mathbf{X};A_i)\|_2^2 + \lambda\|A_i^{(1)}\|_1\\
	\text{subject to}\ \ &\Tr(e^{B(A)\odot B(A)}) - d = 0.
\end{align*}
As described in Section \ref{sec:intro}, there are several other extensions of NOTEARS to the nonlinear case that also adopt the continuous constrained formulation, e.g., DAG-GNN \citep{Yu19daggnn}, GraN-DAG \citep{Lachapelle2020grandag}, and MCSL \citep{Ng2022masked}. In this work we focus on NOTEARS-MLP for convergence analysis, since it is conceptually simple compared to the others. We leave the analysis for the others for future work.

{\bf Optimization.} \ \ \ 
The above optimization problems involve a hard DAG constraint and are solved via ALM, a general method for continuous constrained optimization, which we review next.

\subsection{Augmented Lagrangian Method}\label{sec:alm_background}
Consider the generic constrained optimization problem
\begin{equation}
	\min_{\theta \in \mathbb{R}^m}\ \ f(\theta) \ \ \text{subject to}\ \ h(\theta) = 0, \label{eq:generic-opt}
\end{equation}
where the functions $f:\mathbb{R}^m \rightarrow \mathbb{R}$ and $h:\mathbb{R}^m \rightarrow \mathbb{R}^p$ are both twice continuously differentiable.

The ALM transforms a constrained optimization problem like \eqref{eq:generic-opt} into a sequence of unconstrained ones with solutions converging to a solution of the original problem. The key idea is to combine the Lagrangian formulation with QPM, yielding an augmented problem
\begin{equation*}
	\min_{\theta \in \mathbb{R}^m}\ \ f(\theta)+\frac{\rho}{2}\|h(\theta)\|_2^2 \ \ \text{subject to}\ \ h(\theta) = 0, \label{eq:augmented-opt}
\end{equation*}
where $\rho>0$ is the penalty coefficient. The augmented Lagrangian function of the formulation above is
\begin{equation*}
L(\theta, \alpha;\rho) = f(\theta) + \alpha^\T h(\theta) + \frac{\rho}{2}\|h(\theta)\|_2^2,
\end{equation*}
where $\alpha \in \mathbb{R}^p$ is an estimate of the Lagrange multiplier. A version of the procedure is described in Algorithm \ref{alg:alm}, which is essentially based on dual ascent. The minimization problem of $L(\theta, \alpha;\rho)$ can sometimes be solved only to stationarity, if, for example, it is nonconvex, as is the case in the formulation of NOTEARS owing to the nonconvexity of its specific constraint function. 
\begin{algorithm}[t]
\caption{Augmented Lagrangian Method \citep[Framework~17.3]{Nocedal2006numerical}}
\label{alg:alm}
\begin{algorithmic}[1]
    \Require starting penalty coefficient $\rho_1 > 0$; starting Lagrange multiplier $\alpha_1$; multiplicative factor $\beta > 1$; reduction factor $\gamma < 1$; nonnegative sequence $\{\tau_k\}$; starting point $\theta_0$
    \For{$k=1,2,\ldots$}
        \State {\spaceskip  1.1em  \relax Find an approximate minimizer $\theta_k$ of}
        \Statex \qquad \quad {\spaceskip  0.66em  \relax $L(\cdot,\alpha_k;\rho_k)$, starting at point $\theta_{k-1}$, and}
        \Statex \qquad \quad terminating when $\|\nabla_\theta L(\theta,\alpha_k;\rho_k) \|_2 \leq \tau_k$
        \If {final convergence test satisfied}
            \State \textbf{stop} with approximate solution $\theta_k$
        \EndIf
        \State Update multiplier $\alpha_{k+1} = \alpha_{k}+ \rho_{k}h(\theta_k)$
        \If {$\|h(\theta_k)\|_2 > \gamma \|h(\theta_{k-1})\|_2$}
            \State Update penalty coefficient $\rho_{k+1} = \beta \rho_k$
        \Else
            \State Update penalty coefficient $\rho_{k+1} = \rho_k$
        \EndIf
    \EndFor
\end{algorithmic}
\end{algorithm}

Based on the procedure of ALM outlined above, we review one of its standard convergence results \citep{Bertsekas1982constrained, Bertsekas1999nonlinear, Nocedal2006numerical}. The following definition is required to state this result and is crucial to the contribution of our work.
\begin{definition}[Regular point]\label{def:regularity}
We say that a point $\theta^*$ is regular, or that it satisfies the linear independence constraint qualification (LICQ), if the rows of the Jacobian matrix of $h$ evaluated at $\theta^*$, $\nabla_\theta h(\theta^*) \in \mathbb{R}^{p \times m}$, are linearly independent.
\end{definition}
\begin{theorem}[{\citet[Theorem~17.5 \& 17.6]{Nocedal2006numerical}}]\label{thm:alm}
Let $\theta^*$ be a regular point of \eqref{eq:generic-opt} that satisfies the second-order sufficient conditions (see Appendix \ref{sec:optimality_condition}) with vector $\alpha^*$. Then there exist positive scalars $\bar{\rho}$ (sufficiently large), $\delta$, $\epsilon$, and $M$ such that for all $\alpha_k$ and $\rho_k$ satisfying
    \begin{equation*}\label{eq:alpha_bound}
    \|\alpha_k - \alpha^*\|_2 \leq \rho_k \delta, \quad \rho_k \geq \bar{\rho},
    \end{equation*}
    the problem
    \[
	\min_{\theta\in\mathbb{R}^m}\ \ L(\theta, \alpha_k; \rho_k)\ \ \text{\normalfont subject to}\ \ \|\theta - \theta^*\|_2 \leq \epsilon
    \]
    has a unique solution $\theta_k$. Moreover, we have
    \begin{equation}\label{eq:theta-ineq}
    \|\theta_k - \theta^*\|_2 \leq M\|\alpha_k - \alpha^*\|_2/\rho_k
    \end{equation}
    and
    \begin{equation}\label{eq:alpha-ineq}
        \|\alpha_{k+1} - \alpha^*\|_2 \leq M\|\alpha_{k} - \alpha^*\|_2/\rho_k \, ,
    \end{equation}
    where $\alpha_{k+1}= \alpha_{k}+ \rho_{k}h(\theta_k)$.
\end{theorem}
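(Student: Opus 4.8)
The plan is to establish both claims at once by applying the implicit function theorem to the first-order stationarity system of the inner minimization, treating $\alpha$ and the reciprocal of $\rho$ as parameters. Writing the ordinary Lagrangian as $\mathcal{L}(\theta,\lambda) = f(\theta) + \lambda^\T h(\theta)$, any stationary point of $L(\cdot,\alpha;\rho)$ satisfies
\[
\nabla_\theta L(\theta,\alpha;\rho) = \nabla f(\theta) + \nabla h(\theta)^\T\bigl(\alpha + \rho\, h(\theta)\bigr) = 0.
\]
The first step is to introduce the auxiliary variable $\lambda = \alpha + \rho\, h(\theta)$ --- which is exactly the multiplier update of Eq.~\eqref{eq:alm_2} --- together with the parameter $t = 1/\rho$, so that stationarity is equivalent to
\[
F(\theta,\lambda;\alpha,t) := \begin{pmatrix} \nabla f(\theta) + \nabla h(\theta)^\T \lambda \\ h(\theta) - t(\lambda - \alpha) \end{pmatrix} = 0 .
\]
By construction $F(\theta^*,\alpha^*;\alpha^*,0) = 0$, using the KKT stationarity $\nabla f(\theta^*) + \nabla h(\theta^*)^\T\alpha^* = 0$ and feasibility $h(\theta^*) = 0$.

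Second, I would compute the Jacobian of $F$ with respect to $(\theta,\lambda)$ at the base point $(\theta^*,\alpha^*;\alpha^*,0)$, which equals the KKT matrix
\[
K = \begin{pmatrix} \nabla^2_{\theta\theta}\mathcal{L}(\theta^*,\alpha^*) & \nabla h(\theta^*)^\T \\ \nabla h(\theta^*) & 0 \end{pmatrix} .
\]
Here the regularity hypothesis (Definition~\ref{def:regularity}) guarantees that $\nabla h(\theta^*)$ has full row rank, and together with the second-order sufficient conditions --- positive definiteness of $\nabla^2_{\theta\theta}\mathcal{L}(\theta^*,\alpha^*)$ on the tangent space $\{v : \nabla h(\theta^*)v = 0\}$ --- this is the standard argument that $K$ is nonsingular. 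The implicit function theorem then yields continuously differentiable maps $\theta(\alpha,t)$ and $\lambda(\alpha,t)$, defined on a neighborhood of $(\alpha^*,0)$ (that is, for $\alpha$ close to $\alpha^*$ and $\rho$ large), solving $F=0$ with $\theta(\alpha^*,0)=\theta^*$ and $\lambda(\alpha^*,0)=\alpha^*$.

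Third, the bounds \eqref{eq:theta-ineq} and \eqref{eq:alpha-ineq} would follow from first-order Taylor estimates of these implicit maps. The crucial computation is to differentiate $F=0$ and read off that the derivatives $\partial\theta/\partial\alpha$ and $\partial\lambda/\partial\alpha$ each carry a factor of $t = 1/\rho$ at the base point; this is precisely the source of the $\|\alpha_k-\alpha^*\|/\rho_k$ scaling. Taking $M$ to be a uniform bound on the relevant blocks of $K^{-1}$ over a compact neighborhood, and choosing $\delta,\epsilon$ small and $\bar\rho$ large enough that every admissible pair $(\alpha_k,\rho_k)$ satisfying \eqref{eq:alpha_bound} keeps the iterates inside the domain of the implicit maps, yields \eqref{eq:theta-ineq}. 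Identifying $\alpha_{k+1} = \alpha_k + \rho_k h(\theta_k) = \lambda(\alpha_k,1/\rho_k)$ and applying the same derivative estimate to $\lambda$ gives \eqref{eq:alpha-ineq}.

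The main obstacle is upgrading the stationary point $\theta_k = \theta(\alpha_k,1/\rho_k)$ to a genuine and \emph{unique} local minimizer on the ball $\|\theta-\theta^*\|\le\epsilon$, as asserted in claim~(a); the implicit function theorem by itself only produces a stationary point. For this I would invoke the classical lemma (often attributed to Debreu) that if a symmetric matrix $A$ is positive definite on the null space of $B$, then $A + \rho B^\T B$ is positive definite for all sufficiently large $\rho$. Applied with $A = \nabla^2_{\theta\theta}\mathcal{L}(\theta^*,\alpha^*)$ and $B = \nabla h(\theta^*)$, this shows $\nabla^2_{\theta\theta}L(\theta^*,\alpha^*;\rho)$ is positive definite once $\rho \ge \bar\rho$; by continuity of the Hessian it stays positive definite on a neighborhood, so the interior stationary point is a strict local minimizer and is unique there. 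The delicate part throughout is selecting $\bar\rho$, $\delta$, and $\epsilon$ consistently, so that the nonsingularity, the implicit-map domain, and the Hessian positive-definiteness all hold simultaneously and uniformly over the admissible set \eqref{eq:alpha_bound}.
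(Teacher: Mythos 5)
The paper itself gives no proof of this statement: Theorem~\ref{thm:alm} is imported verbatim from \citet[Theorems~17.5 \& 17.6]{Nocedal2006numerical}, who in turn cite \citet{Bertsekas1982constrained}. So your proposal can only be measured against the classical proof, and your overall route --- the implicit function theorem applied to the perturbed KKT system with $\lambda = \alpha + \rho h(\theta)$ and $t = 1/\rho$, plus Debreu's lemma for positive definiteness --- is indeed the right one. But there are two genuine gaps.

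The first is your choice of parameters. Applying the IFT to $F(\theta,\lambda;\alpha,t)=0$ at the base point $(\theta^*,\alpha^*;\alpha^*,0)$ yields implicit maps only on a \emph{bounded} neighborhood of $(\alpha^*,0)$ in $(\alpha,t)$-space. Condition \eqref{eq:alpha_bound}, however, only requires $\|\alpha_k-\alpha^*\|\le\rho_k\delta$: the multiplier estimate may lie arbitrarily far from $\alpha^*$ provided $\rho_k$ is correspondingly large, and this unboundedness is exactly what makes the theorem useful for ALM, where $\alpha_k$ is not known in advance to stay near $\alpha^*$. No choice of $\bar\rho$ and $\delta$ squeezes the admissible set \eqref{eq:alpha_bound} into a bounded $\alpha$-neighborhood, so your closing step (``choose $\bar\rho,\delta,\epsilon$ so that every admissible pair stays in the domain of the implicit maps'') cannot be executed in your coordinates. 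The classical fix is to rescale: take as parameters $t=1/\rho$ and $u=t(\alpha-\alpha^*)$, rewriting the second block of $F$ as $h(\theta)-t(\lambda-\alpha^*)+u=0$. Then \eqref{eq:alpha_bound} becomes precisely $\|u\|\le\delta$, $0\le t\le 1/\bar\rho$, a compact box around the base point $(0,0)$. Moreover $\theta(t,0)\equiv\theta^*$ and $\lambda(t,0)\equiv\alpha^*$ for \emph{all} $t$ in this range, so the mean value theorem in $u$ alone gives $\|\theta_k-\theta^*\|\le M\|u\|=M\|\alpha_k-\alpha^*\|/\rho_k$ and the analogous bound for $\alpha_{k+1}=\lambda(t,u)$; the $1/\rho$ scaling you extract from $\partial\theta/\partial\alpha$ then comes out automatically and, crucially, uniformly over the whole admissible set.

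The second gap is the uniqueness claim. Debreu's lemma gives positive definiteness of $\nabla^2_{\theta\theta}L(\theta^*,\alpha^*;\rho)$ for $\rho\ge\bar\rho$, but ``by continuity it stays positive definite on a neighborhood'' is not uniform in the parameters: at $\theta\neq\theta^*$ the Hessian contains the term $\sum_i\left(\alpha_i+\rho h_i(\theta)\right)\nabla^2 h_i(\theta)$, and $\rho h_i(\theta)$ blows up as $\rho\to\infty$ wherever $h(\theta)\neq 0$, so the radius of positivity shrinks with $\rho$ and ball-wide convexity of $L(\cdot,\alpha_k;\rho_k)$ fails. What does hold is positive definiteness \emph{at the implicit solution}, because there $\alpha+\rho h(\theta(t,u))=\lambda(t,u)$ stays near $\alpha^*$, so the Hessian equals $\nabla^2_{\theta\theta}\mathcal{L}(\theta(t,u),\lambda(t,u))+\rho\,\nabla h^\T\nabla h$ with arguments uniformly close to $(\theta^*,\alpha^*)$, where a uniform Debreu-type bound applies; this makes $\theta(t,u)$ a strict local minimizer. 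Uniqueness on the fixed ball $\|\theta-\theta^*\|\le\epsilon$ then needs one more step, for instance: for $\epsilon$ small, regularity and continuity give $\nabla h(\theta)$ full row rank on the ball, so any stationary point $\theta$ of $L(\cdot,\alpha_k;\rho_k)$ there determines its own multiplier $\tilde\lambda=\alpha_k+\rho_k h(\theta)=-\bigl(\nabla h(\theta)\nabla h(\theta)^\T\bigr)^{-1}\nabla h(\theta)\nabla \f(\theta)$, a continuous function of $\theta$ equal to $\alpha^*$ at $\theta^*$; hence $(\theta,\tilde\lambda)$ lands in the IFT uniqueness neighborhood and must coincide with $(\theta(t,u),\lambda(t,u))$. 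Without some argument of this kind, claim (a) as stated is not established.
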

An important consequence of Theorem \ref{thm:alm} is that if the penalty coefficient $\rho_k$ is larger than both $\bar{\rho}$ and $\|\alpha_k - \alpha^*\|_2/\delta$, then the inequalities \eqref{eq:theta-ineq} and \eqref{eq:alpha-ineq} hold. If, in addition, $\rho_k > M $, then $\alpha_k \rightarrow \alpha^*$ by \eqref{eq:alpha-ineq} and $\theta_k \rightarrow \theta^*$ by \eqref{eq:theta-ineq}, \emph{without increasing the coefficient $\rho_k$ to infinity}. This property often motivates the usage of the ALM over the other approaches, e.g., QPM, for constrained optimization. Specifically, this was the original motivation provided by \citet[Proposition~3]{Zheng2018notears} for using the ALM. The reason is that the QPM requires bringing the penalty coefficient $\rho_k$ to infinity, which may lead to ill-conditioning issue when solving the minimization problem. In the next section, we examine the required conditions and show that Theorem \ref{thm:alm} does not apply to the continuous constrained formulation proposed by \citet{Zheng2018notears,Zheng2020learning}.
\section{Convergence of the Continuous Constrained Optimization Methods}\label{sec:optimization}
In this section, we take a closer look at the convergence of ALM and QPM for learning DAGs in the linear and nonlinear cases. We assume that there is no unobserved confounder and focus on the formulation of NOTEARS and NOTEARS-MLP. We will show in Section \ref{sec:with_confounding} that our analysis generalizes to the confounded case. We consider the constrained optimization problem
\begin{equation}
	\min_{\theta}\ \ f(\theta)\ \ \text{subject to}\ \ h(B(\theta)) = 0, \label{eq:notears_general}
\end{equation}
where $f(\theta)$ is the objective function and $h(B(\theta))$ is a (scalar-valued) constraint function that enforces acyclicity on the (equivalent) weighted adjacency matrix $B(\theta)$. We assume here that the functions $f$, $h$, and $B$ are continuously differentiable. Specifically, the constraint term proposed by \citet{Zheng2018notears} is given by $h_\text{exp}(B(\theta))= \Tr(e^{B(\theta)\odot B(\theta)}) - d$. As described in Section \ref{sec:notears_background}, in the linear case, the parameter $\theta$ corresponds to the weighted adjacency matrix of the linear SEM, i.e., we have $\theta= W$ and $B(W)= W$. In the nonlinear case, the parameter $\theta$ corresponds to the weights of the MLPs. That is, we have $\theta= A$ and $(B(A))_{ji}= \| j\text{th-row}(A_{i}^{(1)} ) \|_2$. Hereafter we use $f(W)$ and $h_\text{exp}(W)$ to refer to the objective and DAG constraint term in the linear case, and $f(A)$ and $h_\text{exp}(B(A))$ to refer to those in the nonlinear case.

By assuming that function $f$ is continuously differentiable, our analysis does not consider the $\ell_1$ regularization for simplicity. In Section \ref{sec:exp}, we study empirically the constrained formulation with and without the excluded regularization term, and show that our analysis appears to generalize to the $\ell_1$-regularized case.

\subsection{Regularity of DAG Constraint Term}\label{sec:regularity}
To investigate whether Theorem \ref{thm:alm} applies to problem \eqref{eq:notears_general}, one has to first verify if the DAG constraint term $h(B(\theta))$ satisfies the regularity conditions. The following condition is required for our analysis.
\begin{assumption}\label{assump:dag_constraint}
The function $h(B(\theta)) = 0$ if and only if its gradient $\nabla_\theta h(B(\theta)) = 0$.
\end{assumption}
Both DAG constraint terms in the linear and nonlinear cases satisfy the assumption above, with a proof provided in Appendix \ref{sec:proof_constraint_notears}.
\begin{theorem}\label{thm:constraint_notears}
The functions $h_\text{exp}(W)$ and $h_\text{exp}(B(A))$ satisfy Assumption \ref{assump:dag_constraint}.
\end{theorem}
Assumption \ref{assump:dag_constraint} implies that the Jacobian matrix of function $h(B(\theta))$ (after reshaping) evaluated at any feasible point of problem \eqref{eq:notears_general} corresponds to a zero row vector, which is itself is not linearly independent and therefore leads to the following remark.
\begin{remark}\label{remark:alm}
If the function $h(B(\theta))$ satisfies Assumption \ref{assump:dag_constraint}, any feasible solution of problem \eqref{eq:notears_general} is not regular and Theorem \ref{thm:alm} does not apply.
\end{remark}
With Theorem \ref{thm:constraint_notears}, this shows that the advantage of ALM (illustrated by Theorem \ref{thm:alm}) does not apply to the DAG constraints developed by \citet{Zheng2018notears,Zheng2020learning} in the linear and nonlinear cases. Hence, we are left with no guarantee that the penalty coefficient $\rho$ does not have to go to infinity for ALM to converge. In Section \ref{sec:exp_alm_vs_qpm}, we show empirically that $\rho$ grows without converging just like it would in QPM that is prone to ill-conditioning, which we describe in the next section.

\subsection{Quadratic Penalty Method}\label{sec:qpm}
Apart from ALM, QPM is another method for solving the constrained optimization problem \eqref{eq:notears_general}, whose convergence property is studied in this section. We first define the quadratic penalty function
\begin{equation}\label{eq:qpm}
Q(\theta; \rho) = f(\theta) + \frac{\rho}{2} h(B(\theta))^2,
\end{equation}
and describe the procedure of QPM in Algorithm \ref{alg:qpm}. Note that it is essentially the same as ALM but without the Lagrangian part, and thus has a simpler procedure.

This approach adds a quadratic penalty term for the constraint violation to the objective $f(\theta)$. By gradually increasing the penalty coefficient $\rho$, we penalize the constraint violation with increasing severity. Therefore, it makes intuitive sense to think that the procedure converges to a feasible solution (i.e., a DAG solution) as we bring $\rho$ to infinity. However, this is not necessarily true: in general, Algorithm \ref{alg:qpm} returns only a stationary point of the quadratic penalty term $h(B(\theta))^2$  \citep[Theorem~17.2]{Nocedal2006numerical}. Fortunately, if the DAG constraint term $h(B(\theta))$ satisfies Assumption \ref{assump:dag_constraint}, the procedure is guaranteed to converge to a feasible solution, under mild conditions, formally stated in Theorem \ref{thm:qpm}. The proof is provided in Appendix \ref{sec:proof_qpm_convergence}. Note that this theorem and its proof are adapted from Theorem $17.2$ in \citet{Nocedal2006numerical}.

\begin{algorithm}[t]
\caption{Quadratic Penalty Method \citep[Framework~17.1]{Nocedal2006numerical}}
\label{alg:qpm}
\begin{algorithmic}[1]
    \Require starting penalty coefficient $\rho_1 > 0$; multiplicative factor $\beta > 1$; nonnegative sequence $\{\tau_k\}$; starting point $\theta_0$
    \For{$k=1,2,\ldots$}
        \State Find an approximate minimizer $\theta_k$ of $Q(\cdot;\rho_k)$,
        \Statex \qquad \quad {\spaceskip  0.64em  \relax starting at point $\theta_{k-1}$, and terminating}
        \Statex \qquad \quad when $\|\nabla_\theta Q(\theta;\rho_k) \|_2 \leq \tau_k$
        \If {final convergence test satisfied}
            \State \textbf{stop} with approximate solution $\theta_k$
        \EndIf
        \State Update penalty coefficient $\rho_{k+1} = \beta \rho_k$
    \EndFor
\end{algorithmic}
\end{algorithm}

\begin{theorem}\label{thm:qpm}
Suppose in Algorithm \ref{alg:qpm} that the penalty coefficients satisfy $\rho_k \rightarrow \infty$ and the sequence of nonnegative tolerances $\{\tau_k\}$ is bounded.\footnote{A stricter condition $\tau_k \rightarrow 0$ is often used in the analysis of QPM \citep[Theorem~17.2]{Nocedal2006numerical} but is not required here.} Suppose also that the function $h(B(\theta))$ satisfies Assumption \ref{assump:dag_constraint}. Then every limit point $\theta^*$ of the sequence $\{\theta_k\}$ is feasible.
\end{theorem}
\begin{remark}
With the constraint terms $h_\text{exp}(W)$ and $h_\text{exp}(B(A))$, Theorems \ref{thm:qpm} and \ref{thm:constraint_notears} guarantee that, under mild conditions, Algorithm \ref{alg:qpm} returns a DAG solution as $\rho_k \rightarrow \infty$ based on inexact minimizations of $Q(\cdot;\rho_k)$.
\end{remark}
Although the standard convergence result of ALM (i.e., Theorem \ref{thm:alm}) does not hold as the DAG constraint terms proposed by \citet{Zheng2018notears,Zheng2020learning} satisfy Assumption \ref{assump:dag_constraint}, this property ensures that QPM returns a DAG, which is indeed a key to structure learning. The remark above also explains why the implementations of ALM with these two constraints often return DAG solutions in practice (after thresholding). Furthermore, if Assumption \ref{assump:dag_constraint} is satisfied, Theorem \ref{thm:qpm} verifies that one can directly use the value of DAG constraint term as an indicator for the final convergence test in Algorithm \ref{alg:qpm}, i.e., $h(B(\theta_k))\leq\epsilon$ with $\epsilon>0$ being a small tolerance. It is worth noting that this convergence test has been adopted in the current implementation of NOTEARS \citep{Zheng2018notears} and most of its extensions. 

{\bf Practical issue.} \ \ \ 
In practice, one is, at most, only able to increase the penalty coefficient $\rho$ to a very large value, e.g., of order $10^{16}$. Therefore, the final solution can only satisfy $h(B(\theta))\leq\epsilon$ up to numerical precision with $\epsilon>0$ being a small tolerance, e.g., of order $10^{-8}$. In this case, the solution may contain many entries close to zero and does not correspond exactly to a DAG. Following \citet{Zheng2018notears}, a thresholding step on the estimated entries is needed to convert the solution into a DAG; the experiments in Section \ref{sec:exp_alm_vs_qpm} suggest that a small threshold (e.g., $0.05$) suffices. However, a moderately large threshold (e.g., $0.3$) can still be useful for reducing the false discoveries.

\subsection{Other DAG Constraint Term}
Apart from the matrix exponential term proposed by \citet{Zheng2018notears}, \citet{Yu19daggnn} developed a polynomial alternative that may have better numerical stability with a proper choice of $\mu>0$:
\[h_\text{poly}(B(\theta))= \Tr\left((I + \mu B(\theta)\odot B(\theta))^d\right)- d.\]
Our analysis generalizes to the above constraint term.
\begin{corollary}\label{cor:dag_constraint_notears}
The functions $h_\text{poly}(W)$ and $h_\text{poly}(B(A))$ satisfy Assumption \ref{assump:dag_constraint}.
\end{corollary}

\section{With Unobserved Confounding}\label{sec:with_confounding}
We study whether our convergence analysis is applicable to the confounded case. Recently, \citet{Bhattacharya2020differentiable} applied the continuous constrained formulation proposed by \citet{Zheng2018notears} to estimate structures with unobserved confounding, by deriving algebraic characterizations for different classes of acyclic directed mixed graphs (ADMGs), i.e., ancestral, arid, and bow-free graphs. Here we consider the bow-free graphs that are the least restrictive for our further analysis. Note that a bow-free ADMG refers to an ADMG in which the directed and bidirected edges do not both appear for any pair of vertices.

Denote by $W$ and $\Omega$ the weighted adjacency matrix and noise covariance matrix of the linear SEM defined in Eq. \eqref{eq:linear_sem}. In the confounded case, there exist unobserved variables that are parents of more than one observed variable, implying that the noise terms are correlated \citep{Pearl2009causality}. Since $\Omega$ is symmetric, we have $X_i\leftrightarrow X_j$, $i\neq j$ in the ADMG if and only if $\Omega_{ji}\neq 0$. In other words, the weighted adjacency matrix $W$ and noise covariance matrix $\Omega$ represent the directed and bidirected edges in the ADMG, respectively. \citet{Bhattacharya2020differentiable} adopted the ALM to solve the constrained optimization problem \eqref{eq:notears_general} with the approximate BIC score \citep{Su2016sparse}, where $\theta$ corresponds to the parameters $W$ and $\Omega$. In this case, the algebraic constraint term of bow-free ADMGs is given by
\[
h_\text{bf}(W, \Omega) = \Tr(e^{W\odot W}) - d + \operatorname{sum}(W\odot W \odot \Omega \odot \Omega).
\]
The authors have shown that $h_\text{bf}(W, \Omega) = 0$ if and only if the ADMG defined by $W$ and $\Omega$ corresponds to a bow-free graph. To study the convergence property of ALM in this case, we have the following result regarding its regularity, with a proof given in Appendix \ref{sec:proof_constraint_bowfree}.
\begin{theorem}\label{thm:constraint_bowfree}
$h_\text{bf}(W, \Omega)$ satisfies Assumption \ref{assump:dag_constraint}.
\end{theorem}
Similar result also holds for the polynomial constraint term proposed by \citet{Yu19daggnn}.
\begin{corollary}\label{cor:dag_constraint_bowfree}
Theorems \ref{thm:constraint_bowfree} holds if the matrix exponential $e^{W\odot W}$ in the function $h_\text{bf}(W, \Omega)$ is replaced with the matrix polynomial $(I + \mu W\odot W)^d$ for any $\mu > 0$.
\end{corollary}
As a consequence, similar to the setting of NOTEARS and NOTEARS-MLP studied in Section \ref{sec:optimization}, Remark \ref{remark:alm} indicates there is no guarantee such that the penalty coefficient $\rho$ does not have to go to infinity for ALM to converge. Fortunately, with Theorem \ref{thm:qpm}, using QPM to solve the constrained optimization problem is guaranteed to return a solution that satisfies $h_\text{bf}(W, \Omega)\leq\epsilon$ up to numerical precision, under mild conditions.

\section{Resolving the Convergence Issue}\label{sec:resolve_convergence}
The analysis in Sections \ref{sec:optimization} and \ref{sec:with_confounding} implies that the advantage of ALM illustrated by Theorem \ref{thm:alm} does not hold in the continuous constrained formulation for structure learning. Specifically, it is not guaranteed that the penalty coefficient does not have to be increased indefinitely for the convergence of ALM. The experiments in Section \ref{sec:exp_alm_vs_qpm} verify this study and show empirically that ALM requires increasing the coefficient to a very large value to converge to a DAG solution, similar to QPM. This is known to cause numerical difficulties and ill-conditioning issues on the objective landscape \citep{Bertsekas1999nonlinear,Nocedal2006numerical}. The reason is that when the penalty term is large, the Hessian matrix is ill-conditioned and has a high condition number. In this case, the function contour is stretched out, and the gradients may not be the best direction to descend to the minimum, leading to a zigzag path. This is illustrated by a bivariate example in Appendix \ref{sec:bivariate_example}.

In light of our theoretical results, we give a brief overview on different approaches that help resolve the convergence issue, and provide an empirical comparison of them in Sections \ref{sec:exp_optimizer} and \ref{sec:exp_resolve_convergence} to illustrate our findings. In particular, the first approach uses a second-order method that is less susceptible to ill-conditioning, while the second approach devises an alternative algebraic DAG constraint with a local search procedure. The last two approaches adopt a different unconstrained formulation, thus avoiding the convergence issue caused by the hard DAG constraint in problem \eqref{eq:notears_general}.

It is worth noting that the effectiveness of some of the approaches below have been studied separately in the papers that proposed them. To the best of our knowledge, some of these approaches have not been connected with the convergence issue of NOTEARS, which is the focus of this section as well as Sections \ref{sec:exp_optimizer} and \ref{sec:exp_resolve_convergence} . Doing so allows one to understand which approach better resolves the convergence issue.

{\bf Second-order method.} \ \ \ 
As pointed out by \citet{Bertsekas1999nonlinear,Antoniou2007practical,Bottou2018optimization}, one may consider using a second-order method such as the quasi-Newton method \citep{Nocedal2006numerical} that handles ill-conditioning better by incorporating curvature information through approximations of the Hessian matrix. This is consistent with the recent works \citep{Zheng2018notears,Zheng2020learning,Pamfil2020dynotears} that adopt L-BFGS \citep{Byrd2003lbfgs} to solve the optimization subproblems. However, the original motivation of using quasi-Newton method was mainly about efficiency consideration, or, specifically, to reduce the number of evaluations of the matrix exponential that takes $O(d^3)$ cost \citep{Mohy2009scaling,Zheng2018notears}. We note here that another key advantage of using L-BFGS is, interestingly, to help resolve ill-conditioning issue, as verified by the experiments in Section \ref{sec:exp_optimizer} and a bivariate example in Appendix \ref{sec:bivariate_example}.

{\bf Absolute value adjacency matrix and KKT-informed local search.} \ \ \ 
In contrast to the quadratic adjacency matrix $B(\theta)\odot B(\theta)$ in the DAG constraint term $h_\text{exp}(B(\theta))$, the Abs-KKTS method proposed by \citet{Wei2020nofears} adopts an absolute value adjacency matrix given by $h_\text{exp}^\prime(B(\theta))= \Tr(e^{|B(\theta)|}) - d$, where $|\cdot|$ denotes the  element-wise absolute value of a matrix, together with a local search procedure informed by the KKT conditions as a post-processing step. The resulting procedure returns a solution that satisfies the KKT conditions, and leads to improvement in the structure learning performance of NOTEARS.

{\bf Soft constraints.} \ \ \ 
\citet{Ng2020role} showed that soft sparsity and DAG constraints suffice to asymptotically estimate a DAG equivalent to the true DAG under mild conditions when using the likelihood of linear Gaussian directed graphical models (possibly cyclic) as the objective function instead of the least squares loss that corresponds to the likelihood of linear Gaussian DAGs. This gives rise to the following \emph{unconstrained} optimization problem in the case of equal noise variances, denoted as GOLEM-EV:
\begin{align*}
\min_{W\in\mathbb{R}^{d\times d}}\ \ &\frac{d}{2}\log\|\mathbf{X} -  \mathbf{X}W\|_2^2 - \log|\det(I - W)| \\
& \quad + \lambda_1\|W\|_1 + \lambda_2 h_\text{exp}(W),
\end{align*}
where $\lambda_1$ and $\lambda_2$ are the regularization coefficients. With this unconstrained formulation, the ill-conditioning issue caused by the hard DAG constraint in problem \eqref{eq:notears_general} can be completely avoided, since one is able to directly solve the above problem using continuous optimization, without the need of any constrained optimization method like ALM or QPM.

{\bf Direct optimization in DAG space.} \ \ \ 
\citet{Yu2021nocurl} developed an algebraic representation of DAGs based on graph Hodge theory \citep{Jiang2011statistical,Jorgen2009digraphs}, and showed that one can directly perform continuous optimization in the space of all possible DAGs without relying on a hard DAG constraint. Similar to GOLEM-EV, the constrained optimization problem \eqref{eq:notears_general} can be reformulated in the linear case as an \emph{unconstrained} one
\begin{equation}\label{eq:nocurl}
(U^*,p^*)=\argmin_{U\in S, p\in \mathbb{R}^d}\ \ f(U\odot \relu(\grad(p))),
\end{equation}
where $S$ refers to the space of all $d\times d$ skew-symmetric matrices, $(\grad(p))_{ji}=p_i - p_j$ denotes the gradient flow defined on the nodes of a graph \citep{Lim2015hodge}, and $(\relu(M))_{ji}=\max(0, M_{ji})$ denotes the rectified linear unit function \citep{Nair2010relu} of a square matrix $M$. The final solution is given by $W^*=U^*\odot \relu(\grad(p^*))$, whose nonzero entries are guaranteed to represent a DAG \citep[Theorem~3.5]{Yu2021nocurl}. Since the objective function is highly nonconvex, randomly initializing $U$ and $p$ may lead to a stationary point far from the global optimum. The authors thus proposed a two-step procedure, denoted as NoCurl, that first obtains a rough estimate of the solution by solving the subproblem of NOTEARS either once or twice with a slightly large penalty coefficient (e.g., $\rho=10^3$), and uses that estimate to compute the initialization of $U$ and $p$ for problem \eqref{eq:nocurl}. Similar to second-order method, this method was originally motivated by efficiency consideration. Since it avoids the hard DAG constraint and accordingly does not require a large penalty coefficient, we note that this method can also help avoid the ill-conditioning issue.
\section{Experiments}\label{sec:exp}
We conduct experiments on the structure learning tasks and take a closer look at the optimization processes to verify our study. In Section \ref{sec:exp_alm_vs_qpm}, we demonstrate that ALM behaves similarly to QPM, both of which converge to an approximately DAG solution when the penalty coefficients are very large. We compare the ability of different optimization algorithms to handle ill-conditioning in Section \ref{sec:exp_optimizer}, and the other approaches to help resolve the convergence issue in Section \ref{sec:exp_resolve_convergence}.

{\bf Methods.} \ \ \ 
We experiment with both NOTEARS and NOTEARS-MLP. We also consider their variants with the $\ell_1$ regularization term, denoted as NOTEARS-L1 and NOTEARS-MLP-L1, respectively.

{\bf Implementations.} \ \ \ 
Our implementations are based on the code\footnote{\url{https://github.com/xunzheng/notears}} released by \citet{Zheng2018notears, Zheng2020learning} with the DAG constraint term $h_\text{exp}(B)$. We also use the least squares objective and default hyperparameters in our experiments. Unless otherwise stated, we employ the L-BFGS algorithm \citep{Byrd2003lbfgs} to solve each subproblem and a threshold of $0.3$ for post-processing. In the linear case, we use a pre-processing step to center the data by subtracting the mean of each variable from the samples $\mathbf{X}$. The code is available at \url{https://github.com/ignavierng/notears-convergence}.

{\bf Simulations.} \ \ \ 
We simulate the ground truth DAGs using the Erd\"{o}s--R\'{e}nyi \citep{Erdos1959random} or scale-free \citep{Barabasi1999emergence} model with $kd$ edges on average, denoted as ER$k$ or SF$k$, respectively. Unless otherwise stated, based on the graph sizes $d\in\{10,20,50,100\}$ and different data generating procedures, we generate $1000$ samples with standard Gaussian noises. For NOTEARS and NOTEARS-L1, we simulate the linear DAG model with edge weights sampled uniformly from $[-2, -0.5] \cup  [0.5, 2]$, similar to \citep{Zheng2018notears}. For the nonlinear variants NOTEARS-MLP and NOTEARS-MLP-L1, we consider the data generating procedure used by \citet{Zheng2020learning}, where each function is sampled from a Gaussian process with RBF kernel of bandwidth one. Both data models are known to be fully identifiable \citep{Peters2013identifiability, Peters2014causal}.

{\bf Metrics.} \ \ \
We report the structural Hamming distance (SHD), structural intervention distance (SID) \citep{Peters2013structural} and true positive rate (TPR), averaged over $30$ random trials.

\begin{figure*}[!t]
\centering
\includegraphics[width=0.99\textwidth]{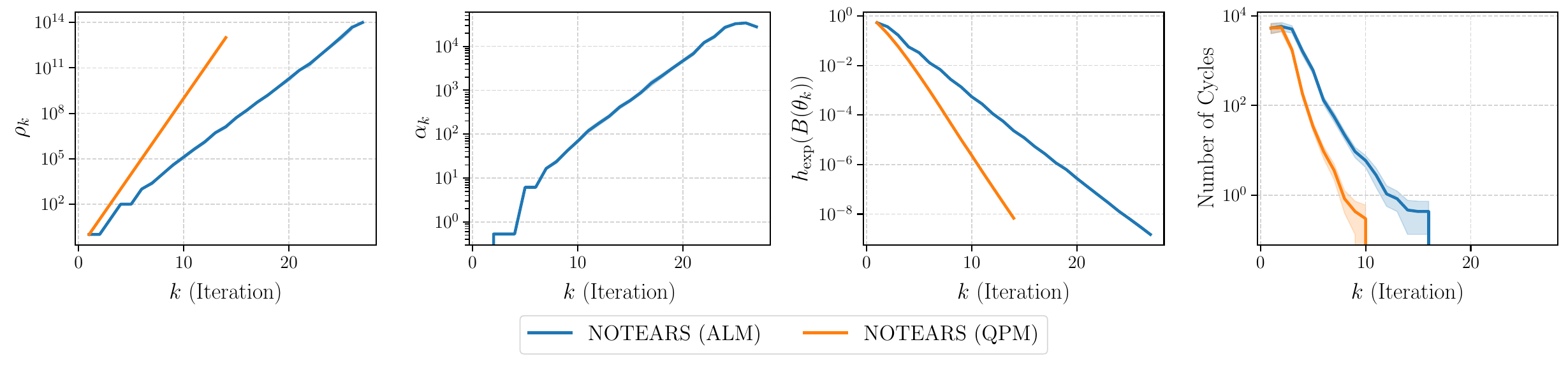}
\caption{Optimization processes of NOTEARS using ALM and QPM on synthetic data. The ground truths are $10$-node ER1 graphs and the sample size is $n=1000$. Each data point corresponds to the $k$-th iteration.
}
\label{fig:ALM_vs_QPM_optz_process_1}
\end{figure*}

\subsection{ALM Behaves Similarly to QPM}\label{sec:exp_alm_vs_qpm}
We conduct experiments to show that ALM behaves similarly to QPM in the context of structure learning, and that both of them converge to a DAG solution. Our goal here is not to show that QPM performs better than ALM, but rather to study their empirical behavior.

We first take a closer look at the optimization processes of ALM and QPM on the $10$-node ER1 graphs. Figure \ref{fig:ALM_vs_QPM_optz_process_1} depicts the penalty coefficient $\rho_k$, estimate of Lagrange multiplier $\alpha_k$ (only for ALM), value of DAG constraint term $h_\text{exp}(B(\theta_k))$, and number of cycles in the $k$-th iteration of the optimization for NOTEARS, while those for NOTEARS-L1, NOTEARS-MLP, and NOTEARS-MLP-L1 are visualized in Figure \ref{fig:ALM_vs_QPM_optz_process_2} in Appendix \ref{sec:supp_exp_results}. Note that we use a small threshold $0.05$ when computing the number of cycles. Complementing our study in Section \ref{sec:regularity}, ALM requires a very large coefficient $\rho_k$ to converge, similar to QPM, which suggests that they both behave similarly and that the standard convergence result of ALM appears to not hold here. On the other hand, when the penalty coefficient $\rho_k$ is very large, one observes that both ALM and QPM converge to a solution whose value of $h_\text{exp}(B(\theta_k))$ is very close to zero, i.e., smaller than $10^{-8}$, which yields a DAG solution after thresholding at $0.05$. Since one is not able to increase the penalty coefficient $\rho_k$ to infinity in practice, this serves as an empirical validation of Theorem \ref{thm:qpm}. Interestingly, Figures \ref{fig:ALM_vs_QPM_optz_process_1} and \ref{fig:ALM_vs_QPM_optz_process_2} suggest that one may consider using QPM instead of ALM in practice as it converges in fewer number of iterations.

We further investigate whether ALM and QPM yield similar structure learning performance. The results with ER1 graphs are reported in Figure \ref{fig:ALM_vs_QPM_results} in Appendix \ref{sec:supp_exp_results}, showing that ALM performs similarly to QPM across all metrics. All these observations appear to generalize to the case with $\ell_1$ regularization term that is not covered by our analysis in Section \ref{sec:optimization}.

{\bf Real data.} \ \ \ 
We conduct empirical studies to verify whether our observations hold on the protein signaling dataset by \citet{Sachs2005causal}. Due to the limited space, the optimization processes of ALM and QPM on this dataset are shown in Figure \ref{fig:ALM_vs_QPM_optz_process_sachs}, which yield consistent observations with those on synthetic data, i.e., ALM behaves similarly to QPM that requires the penalty coefficient to be very large (e.g., $10^{12}$), both of which converge to a solution whose value of $h_\text{exp}(B(\theta_k))$ is very close to zero, yielding a DAG after thresholding.

\subsection{Different Optimization Algorithms for Handling Ill-Conditioning}\label{sec:exp_optimizer}
The previous experiments demonstrate that ALM behaves similarly to QPM that requires bringing the penalty coefficient to infinity in order to converge to a DAG solution, which is known to cause numerical difficulties and ill-conditioning issues on the objective landscape \citep{Bertsekas1999nonlinear,Nocedal2006numerical}. Here we experiment with different optimization algorithms for solving the QPM subproblems of NOTEARS and NOTEARS-MLP-L1, to investigate which of them handle ill-conditioning better and produce a better solution in practice. The optimization algorithms include gradient descent with momentum \citep{Qian1999momentum}, Nesterov accelerated gradient (NAG) \citep{Nesterov1983method}, Adam \citep{Kingma2014adam} and L-BFGS \citep{Byrd2003lbfgs}; see Appendix \ref{sec:supp_exp_details} for the implementation details. Note that gradient descent with momentum and NAG often terminate earlier because of numerical difficulties, so we report its results right before termination. 

Due to the space limit, the optimization processes of NOTEARS and NOTEARS-MLP-L1 on the $100$-node ER1 graphs are visualized in Figure \ref{fig:optimizer_optz_process} in Appendix \ref{sec:supp_exp_results}. One first observes that momentum and NAG terminate when the coefficient $\rho_k$ reaches $10^7$ and $10^6$, respectively, in the linear case, and reaches $10^9$ and $10^8$, respectively, in the nonlinear case, indicating that they fail to handle ill-conditioning in these settings. In the linear case, L-BFGS is more stable than Adam for large $\rho_k$, thus returning a solution with a lower objective value $f(\theta_k)$ and SHD. The opposite is observed in the nonlinear case where Adam has consistently lower $f(\theta_k)$ and SHD than L-BFGS. Similar observations are also made for the overall structure learning performance on ER1 and SF4 graphs, as depicted in  Figures \ref{fig:optimizer_results_1} and \ref{fig:optimizer_results_2} in Appendix \ref{sec:supp_exp_results}, with graph sizes $d\in\{10, 20, 50, 100\}$. In the linear case, L-BFGS performs the best across most settings, while the performance of Adam is slightly better than L-BFGS in the nonlinear case. Gradient descent with momentum and NAG give rise to much higher SHD and SID, especially on large graphs.

\begin{figure}[!t]
\centering
\includegraphics[width=0.36\textwidth]{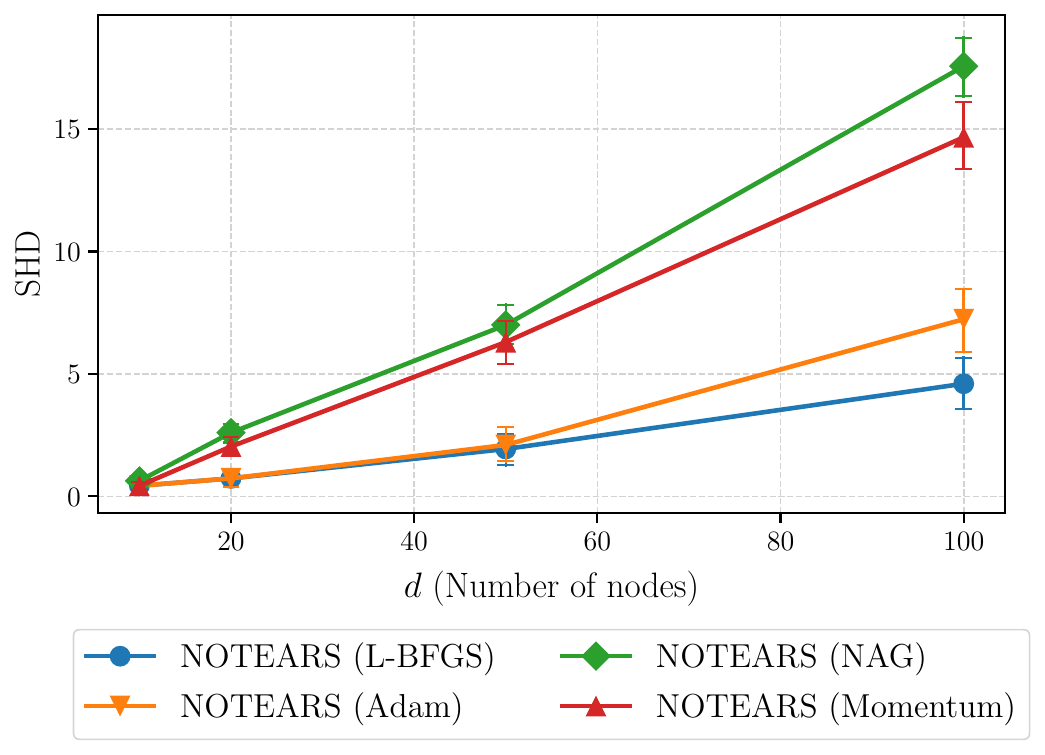}
\caption{Empirical results of different optimization algorithms for solving the QPM subproblems of NOTEARS on synthetic data. The ground truths are ER1 graphs and the sample size is $n=1000$.}
\label{fig:optimizer_results_1}
\end{figure}

As compared to first-order method, the observations above suggest that second-order method such as L-BFGS handles ill-conditioning better by incorporating curvature information through approximations of the Hessian matrix, which verifies our findings in Section \ref{sec:resolve_convergence}. This is also consistent with the optimization literature \citep{Bertsekas1999nonlinear} and the bivariate example in Appendix \ref{sec:bivariate_example}. The Adam algorithm, on the other hand, lies in the middle as it employs diagonal rescaling on the parameter space by maintaining running averages of past gradients \citep{Bottou2018optimization}. In the nonlinear case, it may be surprising that Adam performs slightly better than L-BFGS, as its estimate of the Hessian matrix is not as accurate as that of L-BFGS. Nevertheless, this demonstrates its effectiveness for training MLPs, and may be understandable given its popularity in various deep learning tasks \citep{Schmidt2021descending}.

\subsection{Further Resolving the Convergence Issue}\label{sec:exp_resolve_convergence}
In this section, we provide empirical comparisons of different methods for resolving the convergence issue of the NOTEARS formulation, as described in Section \ref{sec:resolve_convergence}. In particular, we compare NOTEARS-L1 (with L-BFGS) to Abs-KKTS\footnote{We consider only Abs-KKTS instead of NOTEARS-KKTS because otherwise we also have to apply the KKT-informed local search for NoCurl and GOLEM-EV to ensure a fair comparison, which is not the focus of our work.}, NoCurl, and GOLEM-EV. The implementation details of these methods are described in Appendix \ref{sec:supp_exp_details}. Here we consider ER1 and SF4 graphs with $1000$ and $3d$ samples.

Figure \ref{fig:with_baselines_1000_samples_1} shows the SHD on ER1 graphs with $1000$ samples, while the complete results on ER1 and SF4 graphs with $1000$ and $3d$ samples can be found in Figures \ref{fig:with_baselines_1000_samples} and \ref{fig:with_baselines_3d_samples} in Appendix \ref{sec:supp_exp_results}. Overall, GOLEM-EV has the best performance across nearly all settings, especially in terms of SID. NOTEARS-L1 and Abs-KKTS perform similarly on SF4 graphs, while the former has lower SHD and SID on ER1 graphs. NoCurl has a high SHD especially for the case with $3d$ samples, which indicates that it requires a larger sample size to perform well. This experiment suggests that despite being susceptible to ill-conditioning, NOTEARS-L1 with L-BFGS is still very competitive in practice and performs even better than Abs-KKTS and NoCurl that are less susceptible to the convergence issue, possibly because L-BFGS can help remedy the ill-conditioning issue, as demonstrated in Section \ref{sec:exp_optimizer}.

\begin{figure}[!t]
\centering
\includegraphics[width=0.36\textwidth]{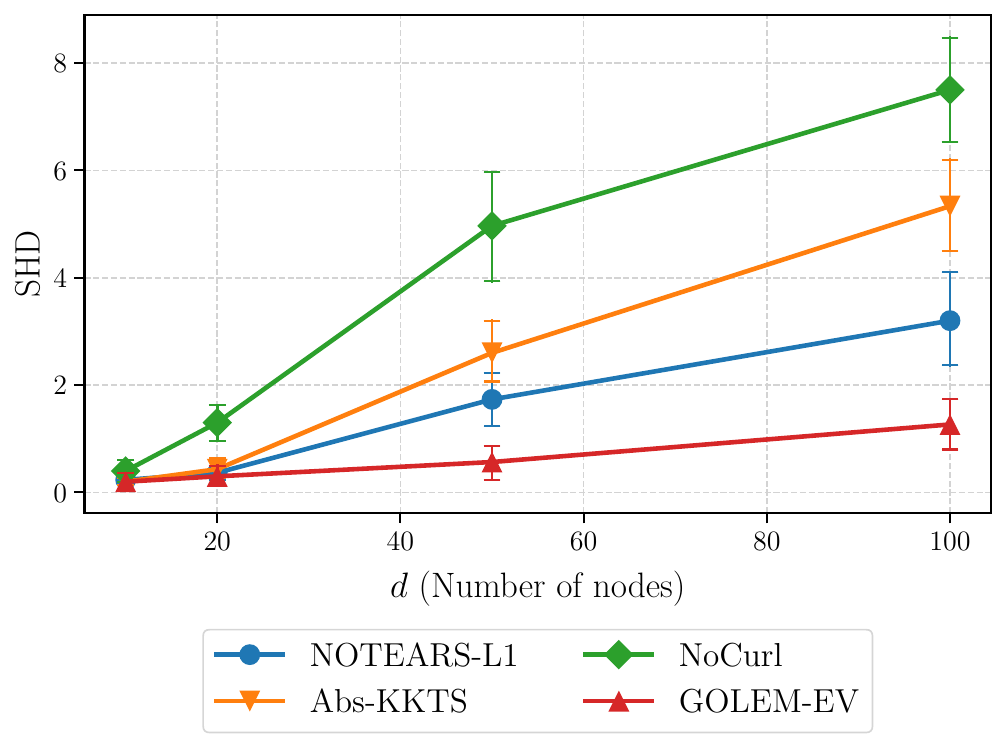}
\caption{Empirical results of different methods on synthetic data with ER1 graphs and $n=1000$ samples.}
\label{fig:with_baselines_1000_samples_1}
\end{figure}

\section{Conclusion}\label{sec:conclusion}
We examined the convergence of ALM and QPM for structure learning in the linear, nonlinear, and confounded cases. In particular, we dug into the standard convergence result of ALM and showed that the required regularity conditions are not satisfied in this setting. Further experiments demonstrate that ALM behaves similarly to QPM that requires bringing the penalty coefficient to infinity and is prone to ill-conditioning. We then showed theoretically and empirically that QPM guarantees convergence to a DAG solution, under mild conditions. The empirical studies also suggest that our analysis generalizes to the cases with $\ell_1$ regularization. Lastly, we connected our theoretical results with different approaches to help resolve the convergence issue, and provided an empirical comparison of them to further illustrate our findings. In particular, it is worth noting that second-order method can help remedy the ill-conditioning issue of the continuous constrained formulation and therefore leads to competitive structure learning results in practice.
\section*{Acknowledgments}
The authors would like to thank the anonymous reviewers for their useful comments. This work was supported in part by the National Institutes of Health (NIH) under Contract R01HL159805, by the NSF-Convergence Accelerator Track-D award \#2134901, by the United States Air Force under Contract No. FA8650-17-C7715, by a grant from Apple, by the Canada CIFAR AI Chair Program, by an IVADO excellence PhD scholarship, and by a Google Focused Research award. The NIH or NSF is not responsible for the views reported in this article. Simon Lacoste-Julien is a CIFAR Associate Fellow in the Learning in Machines \& Brains program.

\bibliographystyle{abbrvnat}
\bibliography{ms}


\clearpage
\appendix

\thispagestyle{empty}
\onecolumn \makesupplementtitle
\section{Optimality Conditions for Equality Constrained Problems}\label{sec:optimality_condition}
We review the optimality conditions for equality constrained optimization problems, which are required for the study in Sections \ref{sec:alm_background} and \ref{sec:regularity}. Note that the following conditions are adopted from Theorems $12.1$ and $12.6$ in \citet{Nocedal2006numerical}, respectively.
\begin{definition}[First-order necessary conditions]\label{def:first_order}
Suppose that $\theta^*$ is a local solution of \eqref{eq:generic-opt}, that the functions $f$ and $h$ in \eqref{eq:generic-opt} are continuously differentiable, and that the LICQ holds at $\theta^*$. Define the Lagrangian function of \eqref{eq:generic-opt} as
\[
\mathcal{L}(\theta, \alpha) = f(\theta) + \alpha^\T h(\theta),
\]
where $\alpha \in \mathbb{R}^p$. Then there is a Lagrange multiplier vector $\alpha^*$ such that the following conditions are satisfied at $(\theta^*, \alpha^*)$:
\begin{subequations}\label{eq:first_order}
\begin{align}
\nabla_\theta \mathcal{L}(\theta^*, \alpha^*)&=0,\\
h(\theta^*)&=0.
\end{align}
\end{subequations}
\end{definition}

\begin{definition}[Second-order sufficient conditions]\label{def:second_order}
Let $\mathcal{L}(\theta, \alpha)$ be the Lagrangian function of \eqref{eq:generic-opt} as in Definition \ref{def:first_order}. Suppose that for some feasible point $\theta^*$ there is a Lagrange multiplier vector $\alpha^*$ such that the conditions \eqref{eq:first_order} are satisfied. Suppose also that
\[
y^\T \nabla_{\theta\theta}^2 \mathcal{L}(\theta^*, \alpha^*) y > 0,\ \ \text{for all~} y\neq 0 \text{~with~} \nabla_\theta h(\theta^*) y=0.
\]
Then $\theta^*$ is a strict local solution for \eqref{eq:generic-opt}.
\end{definition}

\section{Proofs}

\subsection{Proof of Theorem \ref{thm:constraint_notears}}\label{sec:proof_constraint_notears}

\subsubsection*{Proof for function $h_\textnormal{exp}(W)$.}
We first provide the proof for function $h_\text{exp}(W)$. Its gradient is given by
\[
\nabla_W h_\text{exp}(W) = (e^{W\odot W})^\T \odot 2W.
\]

\textbf{If part:} \\
We first consider the diagonal entries $W_{ii}$ for $i\in[d]$\footnote{We denote by $[d]$ the set $\{1,2,\dots,d\}$, likewise for the others.}. Since $\left((e^{W\odot W})^\T\right)_{ii}\geq 1$,  $(\nabla_W h_\text{exp}(W))_{ii}=0$ implies that $W_{ii}=0$, or equivalently, the structure defined by $W$ does not have any self-loop. Now we consider the $(j, i)$-th entry of $W$ with $j \neq i$. $(\nabla_W h_\text{exp}(W))_{ji}=0$ indicates that at least one of $W_{ji}$ and $\left((e^{W\odot W})^\T\right)_{ji}$ is zero. Therefore, the edge from node $j$ to node $i$, if exists, must not belong to any cycle. Combining the above cases, we conclude that all edges must not be part of any self-loop or cycle, and that $W$ represents a DAG, i.e., $h_\text{exp}(W)=0$.

\textbf{Only if part:} \\
Notice that $h_\text{exp}(W) = 0$ implies that $W$ represents a DAG. Since there is not any self-loop, the diagonal entries of $W$ are zero, and so are the diagonal entries of $\nabla_W h_\text{exp}(W)$. It remains to consider the $(j, i)$-th entry of $\nabla_W h_\text{exp}(W)$ with $j \neq i$:
\begin{itemize}
  \item If $W_{ji} = 0$, then it is clear that $(\nabla_W h_\text{exp}(W))_{ji}=0$.
  \item If $W_{ji} \neq 0$, then there is an edge from node $j$ to node $i$ with weight $W_{ji}$. The other term $\left((e^{W\odot W})^\T\right)_{ji}$ indicates the total number of weighted walks from node $i$ to node $j$. If both $W_{ji}$ and $\left((e^{W\odot W})^\T\right)_{ji}$ are nonzero, then there is at least a weighted closed walk passing through nodes $i$ and $j$, contradicting the statement that $W$ represents a DAG.
\end{itemize}
Therefore, at least one of $W_{ji}$ and $\left((e^{W\odot W})^\T\right)_{ji}$ must be zero, and we have $(\nabla_W h_\text{exp}(W))_{ji}=0$.

\subsubsection*{Proof for function $h_\textnormal{exp}(B(A))$.}
We now provide the proof for function $h_\text{exp}(B(A))$, which is an extension to the proof for function $h_\text{exp}(W)$. Let $A^{(t)}=(A_1^{(t)},\dots,A_d^{(t)})\in\mathbb{R}^{d\times s_{t-1}\times s_{t}}$ denote the weights in the $t$-th layer of the MLPs corresponding to all variables. Since $B(A)$ depends only on $A^{(1)}$ by definition, we have
\[
\nabla_{A^{(t)}} h_\text{exp}(B(A)) = 0, \quad t=2,\dots,\ell.
\]
Therefore, it suffices to consider the case of $t=1$, i.e., the weights in the first layer of the MLPs , and show that $h(B(A))=0$ if and only if $\nabla_{A^{(1)}} h_\text{exp}(B(A))=0$. Note that we have $A^{(1)}\in\mathbb{R}^{d\times d\times s_1}$, and the $(j,i,s)$-entry of the gradient is given by
\[
(\nabla_{A^{(1)}} h_\text{exp}(B(A)))_{jis} = ((e^{B(A)\odot B(A)})^\T)_{ji} \odot 2(A^{(1)})_{jis}, \quad j,i\in [d], \ s\in [s_1].
\]
For clarity, we restate the definition of $B(A)$ in terms of $A^{(1)}$:
\begin{flalign}\label{eq:B_mlp}
(B(A))_{ji}&=\| j\text{th-row}(A_{i}^{(1)} ) \|_2\nonumber\\
&=\left(\sum_{s=1}^{s_1}((A^{(1)})_{jis})^2\right)^{\frac{1}{2}},\quad j,i\in [d].
\end{flalign}

\textbf{If part:} \\
We first consider the diagonal entries $(B(A))_{ii}$ for $i\in[d]$. Since $\left((e^{B(A) \odot B(A)})^\T\right)_{ii}\geq 1$,  $(\nabla_{A^{(1)}} h_\text{exp}(B(A)))_{iis}=0$ implies that $(A^{(1)})_{iis}=0$ for $s\in [s_1]$ and therefore $(B(A))_{ii}=0$, indicating that the structure defined by $B(A)$ does not have any self-loop. Now we consider the $(j, i)$-th entry of $B(A)$ with $j \neq i$. Suppose $(B(A))_{ji}\neq 0$, i.e., there is an edge from node $j$ to node $i$ in the structure defined by $B(A)$. By Eq. \eqref{eq:B_mlp}, there exists $s\in [s_1]$ such that $(A^{(1)})_{jis}>0$, which, with the assumption of $(\nabla_{A^{(1)}} h_\text{exp}(B(A)))_{jis}=0$, implies that $\left((e^{B(A)\odot B(A)})^\T\right)_{ji}=0$. This indicates that there is not any weighted walk from node $i$ to node $j$. Therefore, the edge from node $j$ to node $i$, if exists, must not belong to any cycle. Combining the above cases, we conclude that all edges must not be part of any self-loop or cycle, and that $B(A)$ represents a DAG, i.e., $h_\text{exp}(B(A))=0$.

\textbf{Only if part:} \\
Notice that $h_\text{exp}(B(A)) = 0$ implies that $B(A)$ represents a DAG. Since there is not any self-loop, $(B(A))_{ii}=0$ indicates that $(A^{(1)})_{iis}=0$ for $s\in[s_1]$, and therefore $(\nabla_{A^{(1)}} h_\text{exp}(B(A)))_{iis}=0$. It remains to consider the $(j, i, s)$-th entry of $\nabla_{A^{(1)}} h_\text{exp}(B(A))$ with $j \neq i$ and $s\in[s_1]$:
\begin{itemize}
  \item If $(A^{(1)})_{jis}=0$, then it is clear that $(\nabla_{A^{(1)}} h_\text{exp}(B(A)))_{jis}=0$.
  \item If $(A^{(1)})_{jis} \neq 0$, then we must have $(B(A))_{ji}\neq 0$ by Eq. \eqref{eq:B_mlp} and there is an edge from node $j$ to node $i$ in the structure defined by $B(A)$. The other term $\left((e^{B(A) \odot B(A)})^\T\right)_{ji}$ indicates the total number of weighted walks from node $i$ to node $j$. If both $(A^{(1)})_{jis}$ and $\left((e^{B(A)\odot B(A)})^\T\right)_{ji}$ are nonzero, there is at least a weighted closed walk passing through nodes $i$ and $j$, contradicting the statement that $B(A)$ represents a DAG.
\end{itemize}
Therefore, at least one of $(A^{(1)})_{jis}$ and $\left((e^{B(A)\odot B(A)})^\T\right)_{ji}$ must be zero, and we have $(\nabla_{A^{(1)}} h_\text{exp}(B(A)))_{jis}=0$.

\subsection{Proof of Theorem \ref{thm:qpm}}\label{sec:proof_qpm_convergence}
By differentiating $Q(\theta; \rho_k)$ in Eq. \eqref{eq:qpm}, we obtain
\[
\nabla_\theta Q(\theta_k; \rho_k) = \nabla_\theta f(\theta_k) + \rho_k h(B(\theta_k))\nabla_\theta h(B(\theta_k)) .
\]
From the termination criterion of the subproblems in Algorithm \ref{alg:qpm}, we have
\[
\left \| \nabla_\theta f(\theta_k) + \rho_k h(B(\theta_k))\nabla_\theta h(B(\theta_k)) \right \|_2 \leq \tau_k \leq \tau,\]
where $\tau$ is an upper bound on $\{\tau_k\}$. By rearranging this expression (and in particular using the inequality $\|a\|_2 - \|b\|_2\leq \|a+b\|_2$), we obtain
\[
\|h(B(\theta_k))\nabla_\theta h(B(\theta_k))\|_2 \leq \frac{1}{\rho_k}\left(\tau + \|\nabla_\theta f(\theta_k)\|_2\right).
\]
Let $\theta^*$ be a limit point of the sequence of iterates. Then there is a subsequence $\mathcal{K}$ such that $\lim_{k\in\mathcal{K}} \theta_k = \theta^*$. By the continuity of $\nabla_\theta f$, when we take limits as ${k\rightarrow \infty}$ for $k\in\mathcal{K}$, the bracketed term on the right-hand-side approaches $\tau + \|\nabla_\theta f(\theta^*)\|_2$, so because $\rho_k \rightarrow \infty$, the right-hand-side approaches zero. This implies that $\lim_{k\in\mathcal{K}}\|h(B(\theta_k))\nabla_\theta h(B(\theta_k))\|_2 = 0$. By the continuity of $h$, $\nabla_\theta h$, and $B$, we conclude that
\[
h(B(\theta^*))\nabla_\theta h(B(\theta^*)) = 0, 
\]
which, by Assumption \ref{assump:dag_constraint}, yields
\[
h(B(\theta^*)) = 0.
\]

\subsection{Proof of Theorem \ref{thm:constraint_bowfree}}\label{sec:proof_constraint_bowfree}
The gradient of $h_\text{bf}(W, \Omega)$ is given by
\begin{align*}
    \nabla_W h_\text{bf}(W, \Omega) &= (e^{W\odot W})^\T \odot 2W + 2W \odot \Omega \odot \Omega,\\
    \nabla_\Omega h_\text{bf}(W, \Omega) &= 2 W\odot W\odot\Omega.
\end{align*}

\textbf{If part:} \\
The gradient term $\nabla_\Omega h_\text{bf}(W, \Omega) = 2 W\odot W\odot\Omega=0$ indicates that $W_{ji}\Omega_{ji}=0$ for $j,i\in[d]$, and thus there is not any bow in the structure defined by $W$ and $\Omega$. Therefore, we also have $2W \odot \Omega \odot \Omega=0$, which, with $\nabla_W h_\text{bf}(W, \Omega)=0$, implies that $(e^{W\odot W})^\T \odot 2W=0$. By Theorem \ref{thm:constraint_notears}, this indicates that there is not any directed cycle. Therefore, the structure defined by $W$ and $\Omega$ is a bow-free ADMG and we have $h_\text{bf}(W, \Omega)=0$.

\textbf{Only if part:} \\
Notice that $h_\text{bf}(W, \Omega) = 0$ implies that the structure defined by $W$ and $\Omega$ is a bow-free ADMG. Since there is not any directed cycle in an ADMG, we have $(e^{W\odot W})^\T \odot 2W=0$ by Theorem \ref{thm:constraint_notears}. Furthermore, since there is not any bow in the structure, we must have $W_{ji}\Omega_{ji}=0$ for $j,i\in[d]$, which implies $2W \odot \Omega \odot \Omega=0$ and $2W\odot W\odot\Omega=0$. Therefore, we have $\nabla_W h_\text{bf}(W, \Omega)=0$ and $\nabla_\Omega h_\text{bf}(W, \Omega)=0$.

\section{Bivariate Example: Ill-Conditioning and Optimization Algorithms}\label{sec:bivariate_example}
We provide a bivariate example to illustrate the ill-conditioning issue and the effectiveness of different optimization algorithms. Here we focus on the asymptotic case in which the true covariance matrix is known.

{\bf Setup.} \ \ \ 
Consider the bivariate linear Gaussian model defined by the weighted adjacency matrix
\[W_0 = \begin{bmatrix}
0& 2 \\ 
0 & 0
\end{bmatrix}\]
and noise covariance matrix
\[\Omega_0 = \begin{bmatrix}
1& 0 \\ 
0 & 1
\end{bmatrix}.\]
In other words, the additive noises are assumed to be standard Gaussians. The true covariance matrix of $X$ is then given by
\begin{flalign*}
\Sigma_0 &= (I-W_0)^{-\T} \Omega_0 (I-W_0)^{-1}\\
&=\begin{bmatrix}
1 & 2 \\ 
2 & 5
\end{bmatrix}.
\end{flalign*}
We define the matrix $W$ with variables $b$ and $c$ as
\[W = \begin{bmatrix}
0& b \\ 
c & 0
\end{bmatrix},\]
which yields the corresponding least squares objective
\begin{flalign*}
f(W) &= \frac{1}{2}\Tr\big( (I-W)^{\T} \Sigma_0 (I-W) \big)\\
&= \frac{1}{2}\left( (b - 2)^2 + (2 c - 1)^2 + c^2 + 1 \right).
\end{flalign*}
We consider a simplified DAG constraint $bc=0$, which leads to the quadratic penalty function
\[
Q(W;\rho)=f(W)+\frac{\rho}{2}b^2c^2.
\]
Clearly, $W$ represents a DAG if and only if $bc=0$.
\begin{figure}[!t]
\centering
\subfloat[$\rho=0$.]{
  \includegraphics[width=0.313\textwidth]{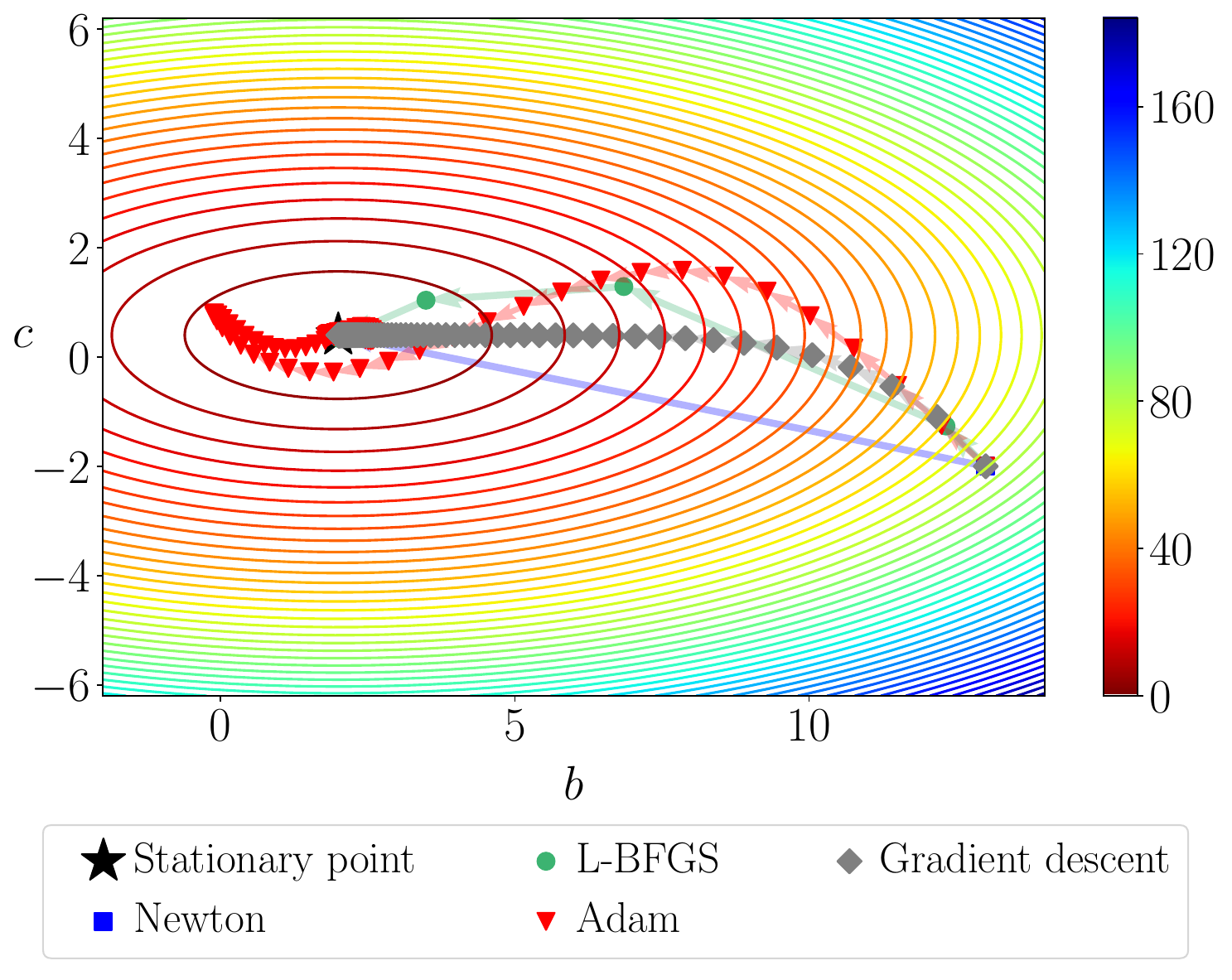}
}
\subfloat[$\rho=1$.]{
  \includegraphics[width=0.33\textwidth]{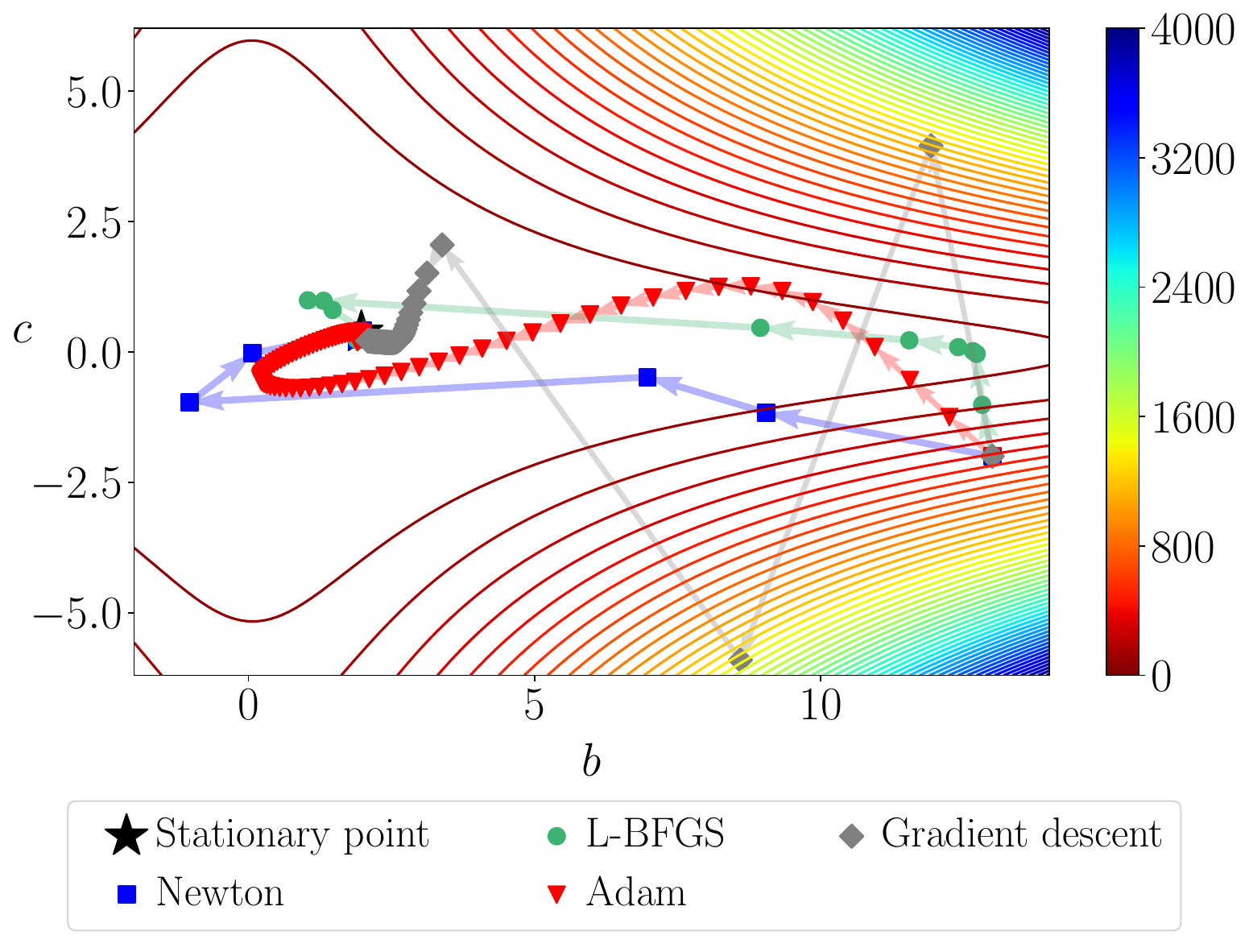}
}
\subfloat[$\rho=10^6$.]{
  \includegraphics[width=0.333\textwidth]{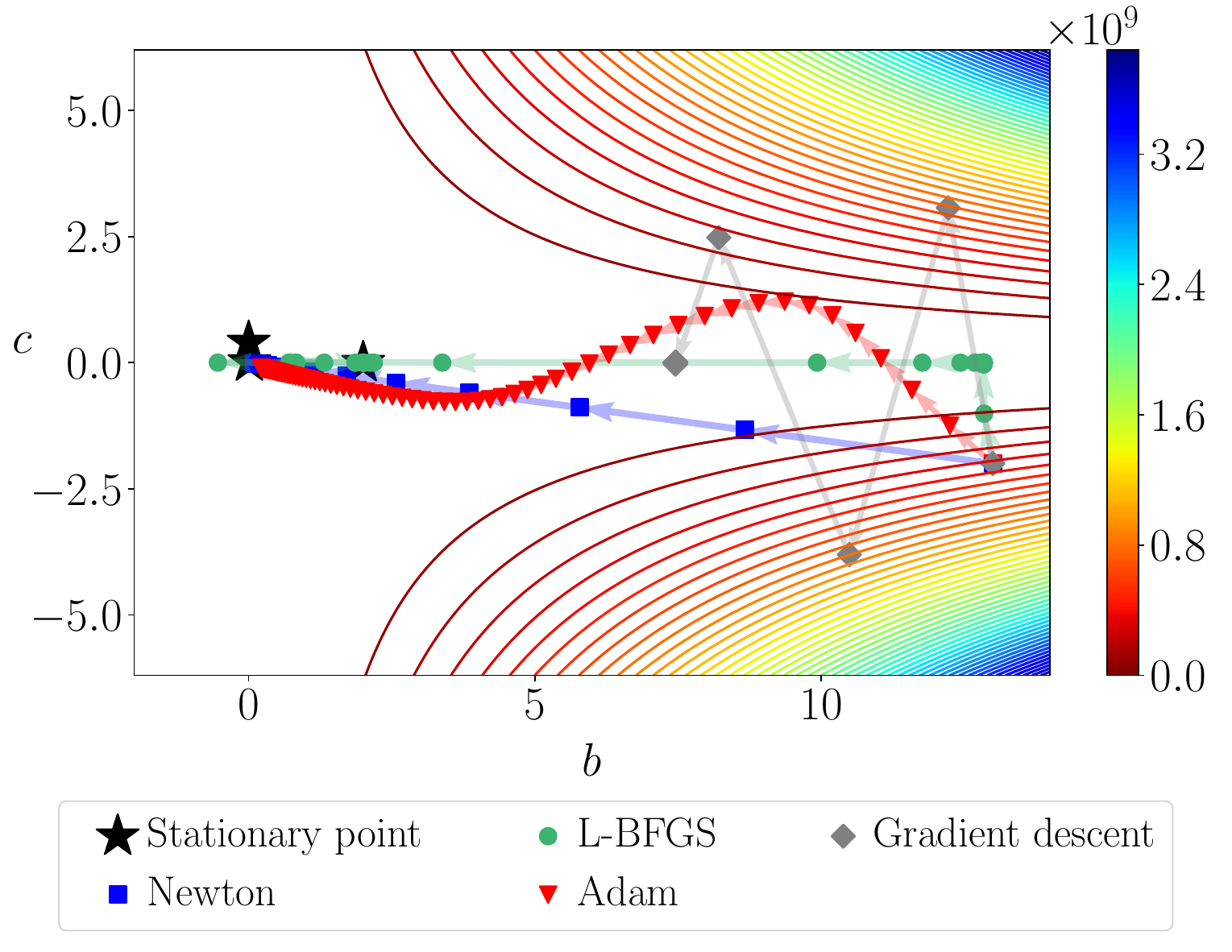}
}
\caption{Contour plots of the quadratic penalty function $Q(W;\rho)$ with different penalty coefficients. The optimization trajectories of different algorithms for minimizing $Q(W;\rho)$ are visualized.}
\label{fig:bivariate_example}
\end{figure}

{\bf Contour.} \ \ \ 
The contour plots of $Q(W;\rho)$ w.r.t. variables $b$ and $c$ with different penalty coefficients, i.e., $\rho=0,1,10^6$, are visualized in Figure \ref{fig:bivariate_example}. When $\rho=0$, the contour is elliptical as the least squares objective corresponds to a quadratic function. With a small penalty coefficient $\rho=1$, the contour is slightly stretched out and deviates from being elliptical. When a large coefficient $\rho=10^6$ is used, the function contour is highly stretched out along both x and y axes. In this case, when the values of $b$ and $c$ are moderately large, a small change in either of them can lead to a large change in the penalty function $Q(W;\rho)$.

{\bf Optimization algorithms.} \ \ \ 
We further study the behavior of different optimization algorithms for solving the minimization problem of $Q(W;\rho)$. In particular, we visualize the trajectories of Newton's method, L-BFGS \citep{Byrd2003lbfgs}, Adam \citep{Kingma2014adam}, and gradient descent starting from the point $(b,c)=(13, -2)$. We pick an initial point that is relatively far from the origin for better illustration of the ill-conditioning issue.

The optimization trajectories and the stationary points for different penalty coefficients are visualized in Figure \ref{fig:bivariate_example}. When $\rho=0,1$, all four algorithms converge to the unique stationary point. With a small penalty coefficient $\rho=1$, one observes that gradient descent moves along a zigzag path during the first few iterations. A possible reason is that the gradients may not be the best direction to descend to the stationary point. This phenomenon becomes more severe when $\rho=10^6$ is used; specifically, gradient descent follows a zigzag path and eventually reaches a point where it no longer makes progress, owing to the ill-posed optimization landscape. In this case,  Newton's method and Adam converge to the same stationary point, while L-BFGS converges to the other one. It is also observed that Newton's method converges in the fewest number of iterations in all cases, and L-BFGS is on par with it as compared to Adam and gradient descent.

The observations above demonstrate that ill-conditioning is a huge concern especially for first-order method like gradient descent, which may produce a bad solution or not converge well. They also corroborate our findings in Section \ref{sec:exp_optimizer}, i.e, second-order method like Newton's method and L-BFGS helps resolve the ill-conditioning issue by incorporating the curvature information. In particular, Newton's method computes the direction of descent using both first-order (Jacobian) and second-order (Hessian) information, while, instead of explicitly evaluating the Hessian matrix, L-BFGS relies on the approximations of Hessian. Therefore, Newton's method is computationally less efficient in practice, which thus is not included in the experiments in Section \ref{sec:exp_optimizer}. On the other hand, the Adam algorithm, loosely speaking, lies in the middle between first-order and second-order methods as it employs diagonal rescaling on the parameter space that can be interpreted as second-order-type information \citep{Bottou2018optimization}. Therefore, it can be less susceptible to ill-conditioning as compared to gradient descent, but may require more iterations to converge than Newton's method and L-BFGS.

In the empirical study above, we find that gradient descent is sensitive to the choice of learning rate (or referred to as the step size). Therefore, we manually pick it such that the algorithm does not diverge or converge too slowly. For the other algorithms, they are relatively robust to the choice of learning rate. 

\section{Supplementary Experiment Details}\label{sec:supp_exp_details}
This section provides further experiment details of different optimization algorithms and structure learning methods for Section \ref{sec:exp}.

{\bf Optimization algorithms.} \ \ \ 
We implement the optimization algorithms including gradient descent with momentum, NAG, and Adam with PyTorch \citep{Paszke2019pytorch}. For Adam, we use a learning rate of $10^{-3}$. For gradient descent with momentum and NAG, we set the learning rate to $10^{-4}$ and the momentum factor to $0.9$. We set the number of optimization iterations to $10^4$ for all these algorithms. We use the implementation and default hyperparameters of L-BFGS released by \citet{Zheng2018notears,Zheng2020learning}.

{\bf Structure learning methods.} \ \ \ 
The implementations of Abs-KKTS\footnote{\url{https://github.com/skypea/DAG_No_Fear}}, NoCurl\footnote{\url{https://github.com/fishmoon1234/DAG-NoCurl}}, and GOLEM-EV\footnote{\url{https://github.com/ignavierng/golem}} are available on the authors' GitHub repositories. For all these methods, we use the default hyperparameters and the DAG constraint term proposed by \citet{Zheng2018notears}, i.e., $h_\text{exp}(W)$, although some of the authors' original implementations adopt the polynomial alternative proposed by \citet{Yu19daggnn}, i.e., $h_\text{poly}(W)$. Similar to NOTEARS, we use a pre-processing step to center the data by subtracting the mean of each variable from the samples $\mathbf{X}$.

\section{Supplementary Experiment Results}\label{sec:supp_exp_results}
This section provides further experiment results for Section \ref{sec:exp}; see Figures \ref{fig:ALM_vs_QPM_optz_process_2}, \ref{fig:ALM_vs_QPM_results}, \ref{fig:ALM_vs_QPM_optz_process_sachs}, \ref{fig:optimizer_optz_process}, \ref{fig:optimizer_results_2}, \ref{fig:with_baselines_1000_samples}, and \ref{fig:with_baselines_3d_samples}.

\begin{figure}[!t]
\centering
\subfloat[NOTEARS-L1.]{
  \includegraphics[width=0.97\textwidth]{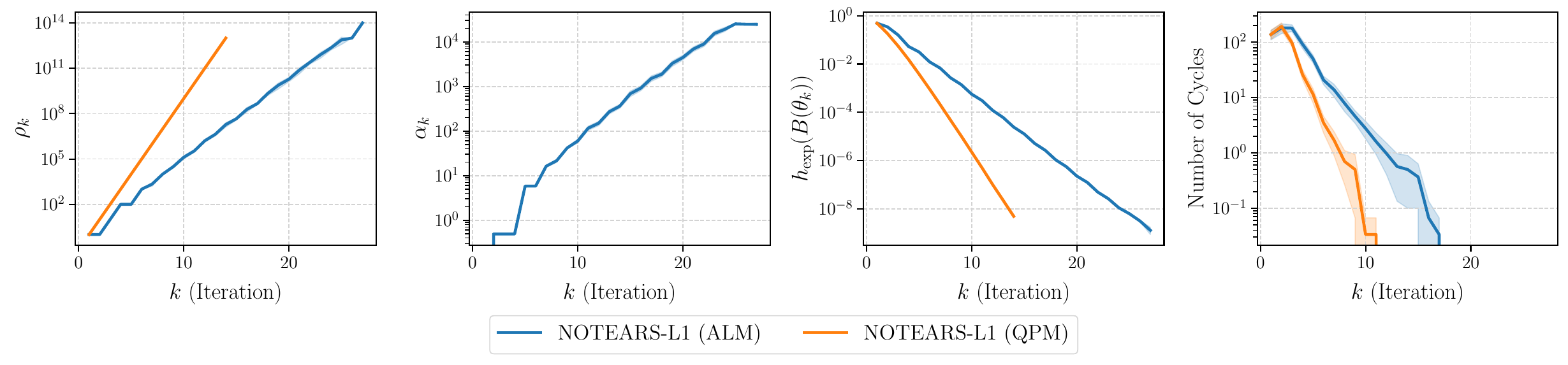}
} \\
\subfloat[NOTEARS-MLP.]{
  \includegraphics[width=0.97\textwidth]{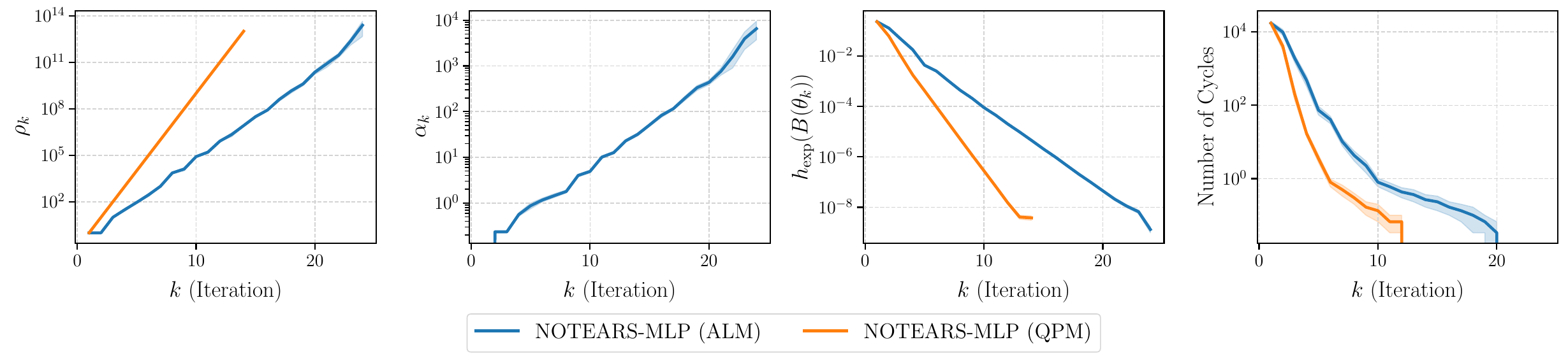}
}\\
\subfloat[NOTEARS-MLP-L1.]{
  \includegraphics[width=0.97\textwidth]{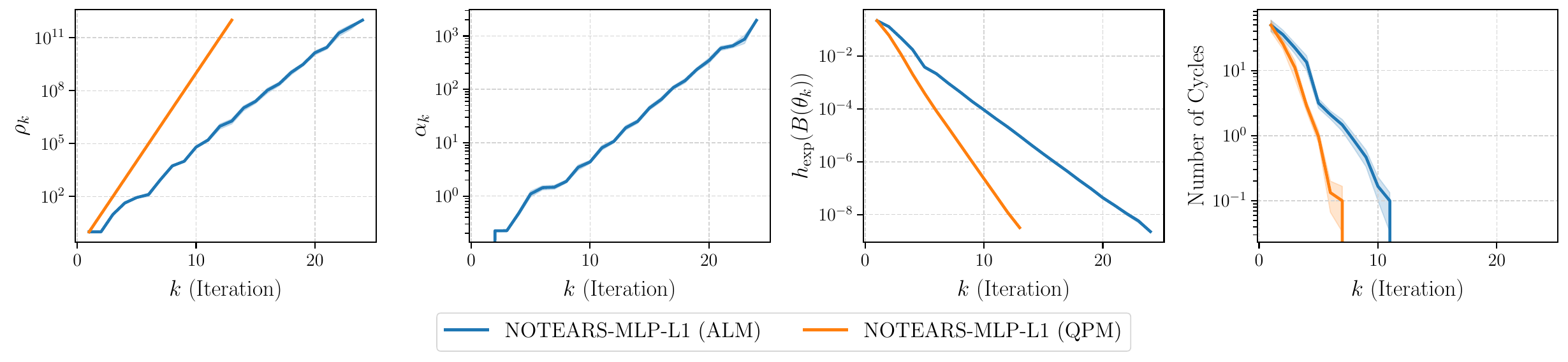}
}
\caption{Optimization processes of the continuous constrained formulation using ALM and QPM on synthetic data. The ground truths are $10$-node ER1 graphs and the sample size is $n=1000$. Each data point corresponds to the $k$-th iteration. Shaded area denotes standard errors over $30$ trials.}
\label{fig:ALM_vs_QPM_optz_process_2}
\end{figure}

\begin{figure}[!t]
\centering
\subfloat[NOTEARS.]{
  \includegraphics[width=0.97\textwidth]{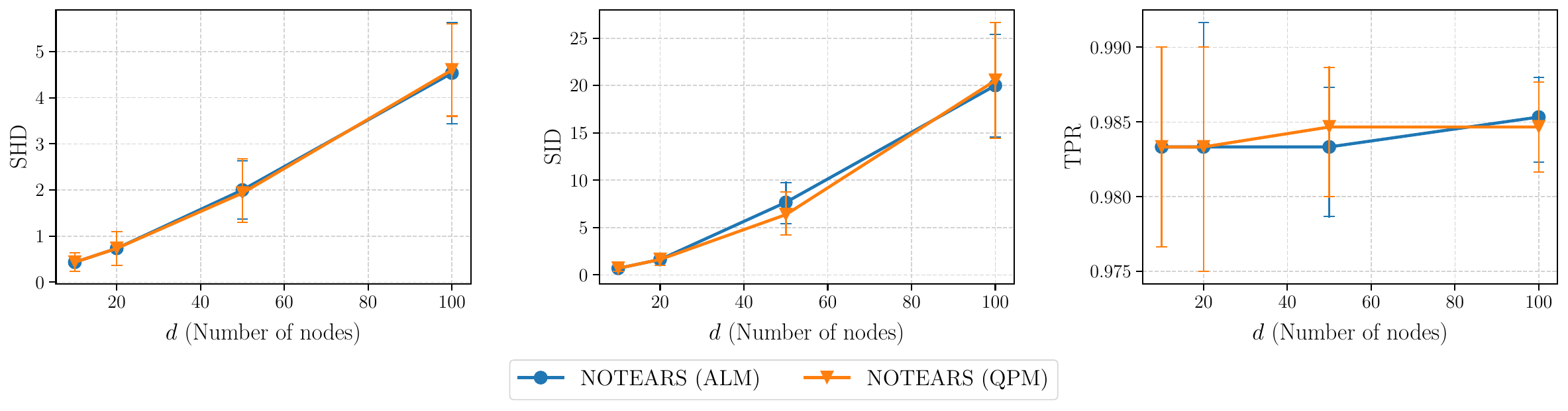}
} \\
\subfloat[NOTEARS-L1.]{
  \includegraphics[width=0.97\textwidth]{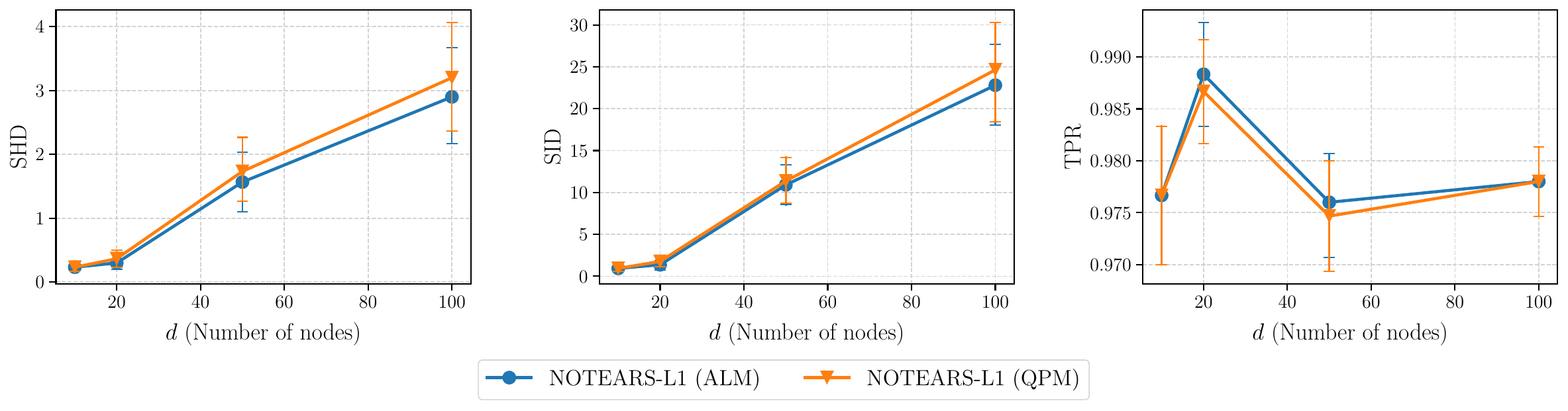}
} \\
\subfloat[NOTEARS-MLP.]{
  \includegraphics[width=0.97\textwidth]{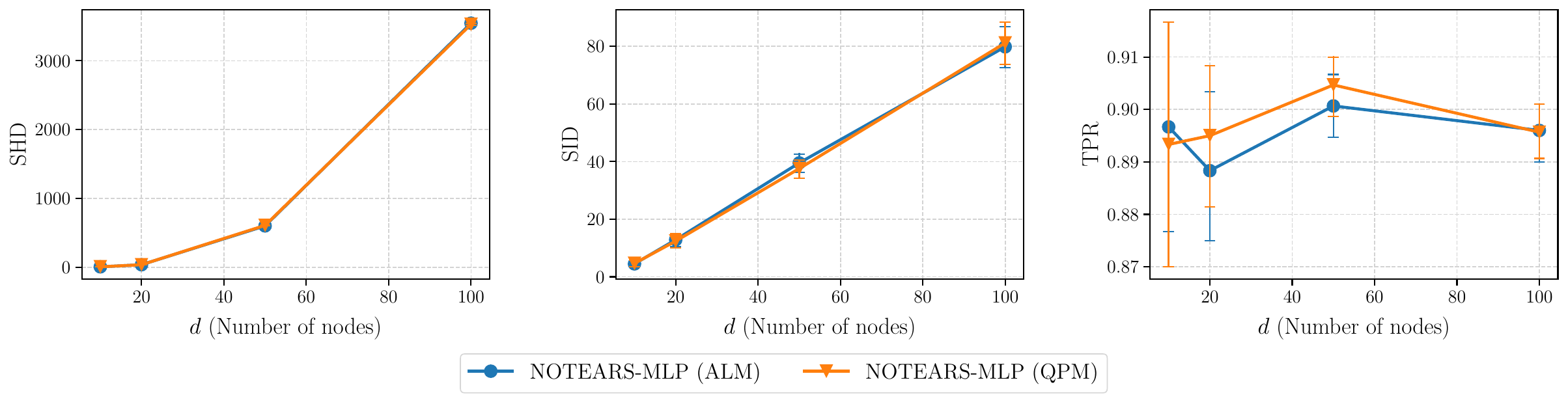}
} \\
\subfloat[NOTEARS-MLP-L1.]{
  \includegraphics[width=0.97\textwidth]{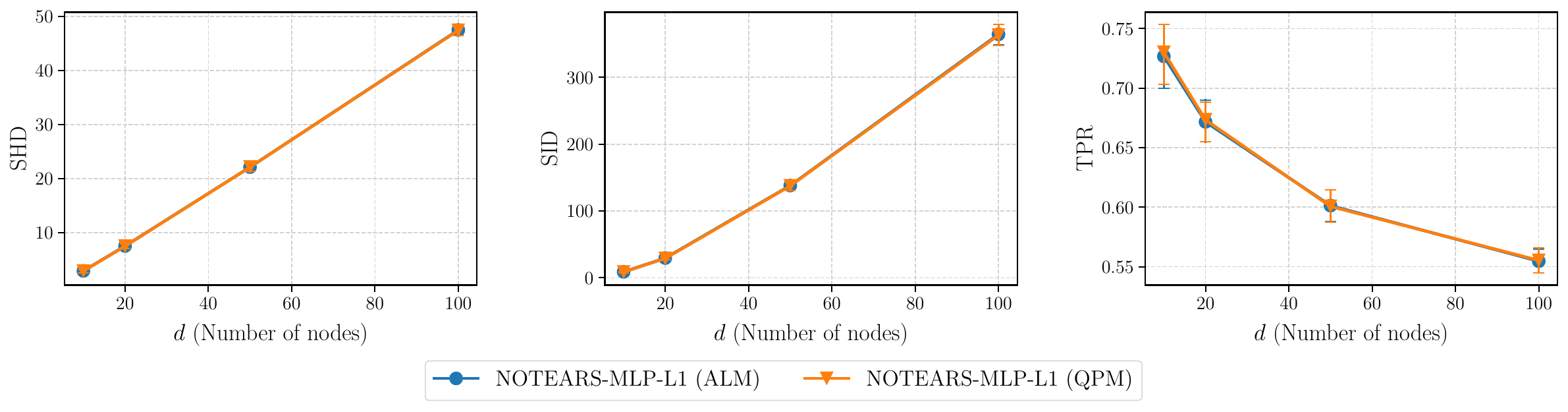}
}
\caption{Empirical results of the continuous constrained formulation using ALM and QPM on synthetic data. The ground truths are ER1 graphs and the sample size is $n=1000$. Lower is better, except for TPR. Error bars denote standard errors over $30$ trials.}
\label{fig:ALM_vs_QPM_results}
\end{figure}

\begin{figure}[!t]
\centering
\subfloat[NOTEARS.]{
  \includegraphics[width=0.97\textwidth]{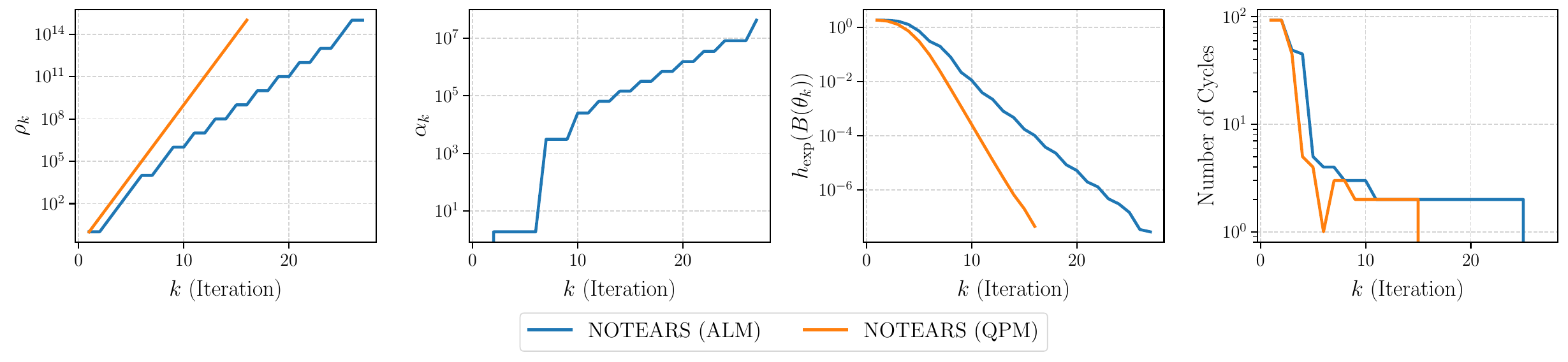}
} \\
\subfloat[NOTEARS-L1.]{
  \includegraphics[width=0.97\textwidth]{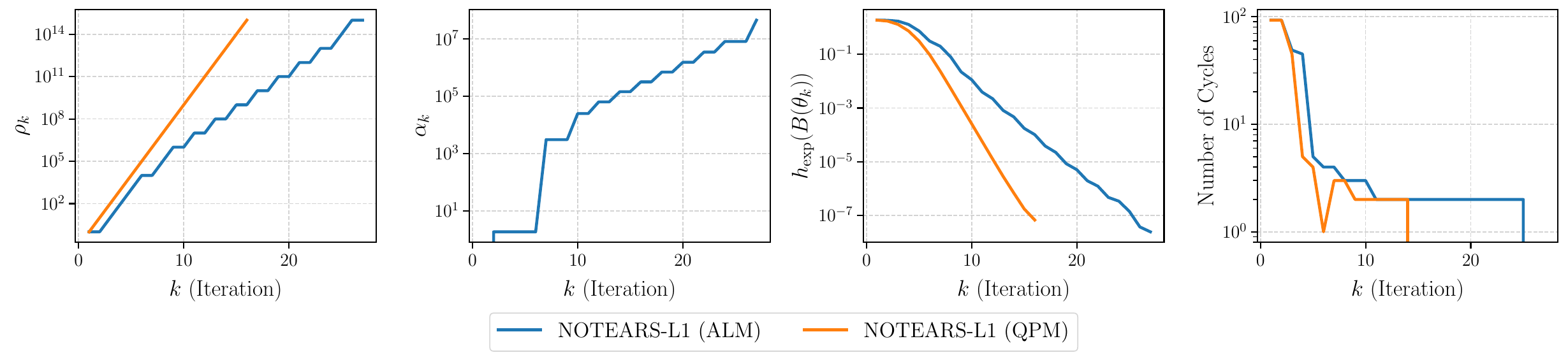}
} \\
\subfloat[NOTEARS-MLP.]{
  \includegraphics[width=0.97\textwidth]{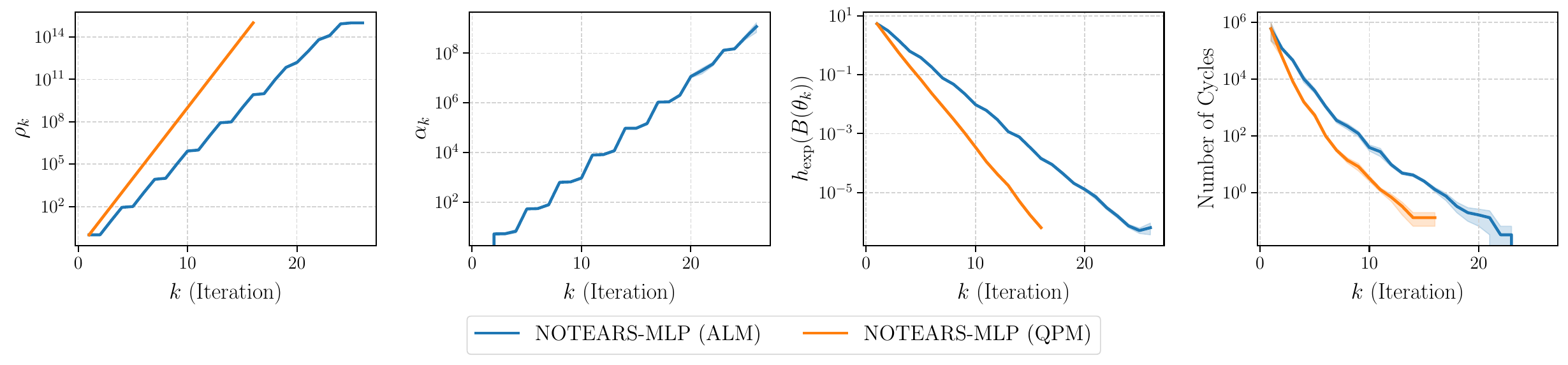}
}\\
\subfloat[NOTEARS-MLP-L1.]{
  \includegraphics[width=0.97\textwidth]{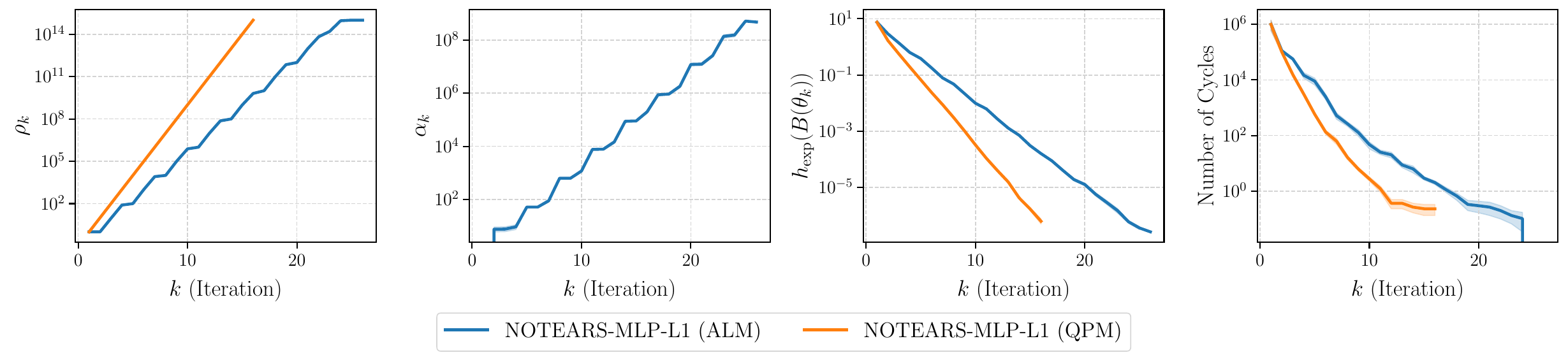}
}
\caption{Optimization processes of the continuous constrained formulation using ALM and QPM on a real dataset by \citet{Sachs2005causal}. Each data point corresponds to the $k$-th iteration. Shaded area denotes standard errors over $30$ trials.}
\label{fig:ALM_vs_QPM_optz_process_sachs}
\end{figure}

\begin{figure}[!t]
\centering
\subfloat[NOTEARS.]{
  \includegraphics[width=0.97\textwidth]{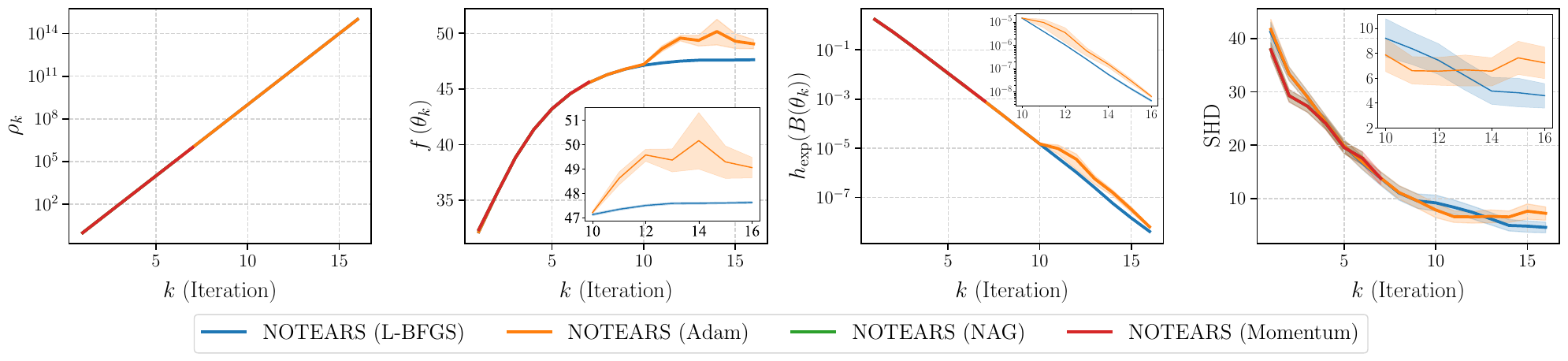}
} \\
\subfloat[NOTEARS-MLP-L1.]{
  \includegraphics[width=0.97\textwidth]{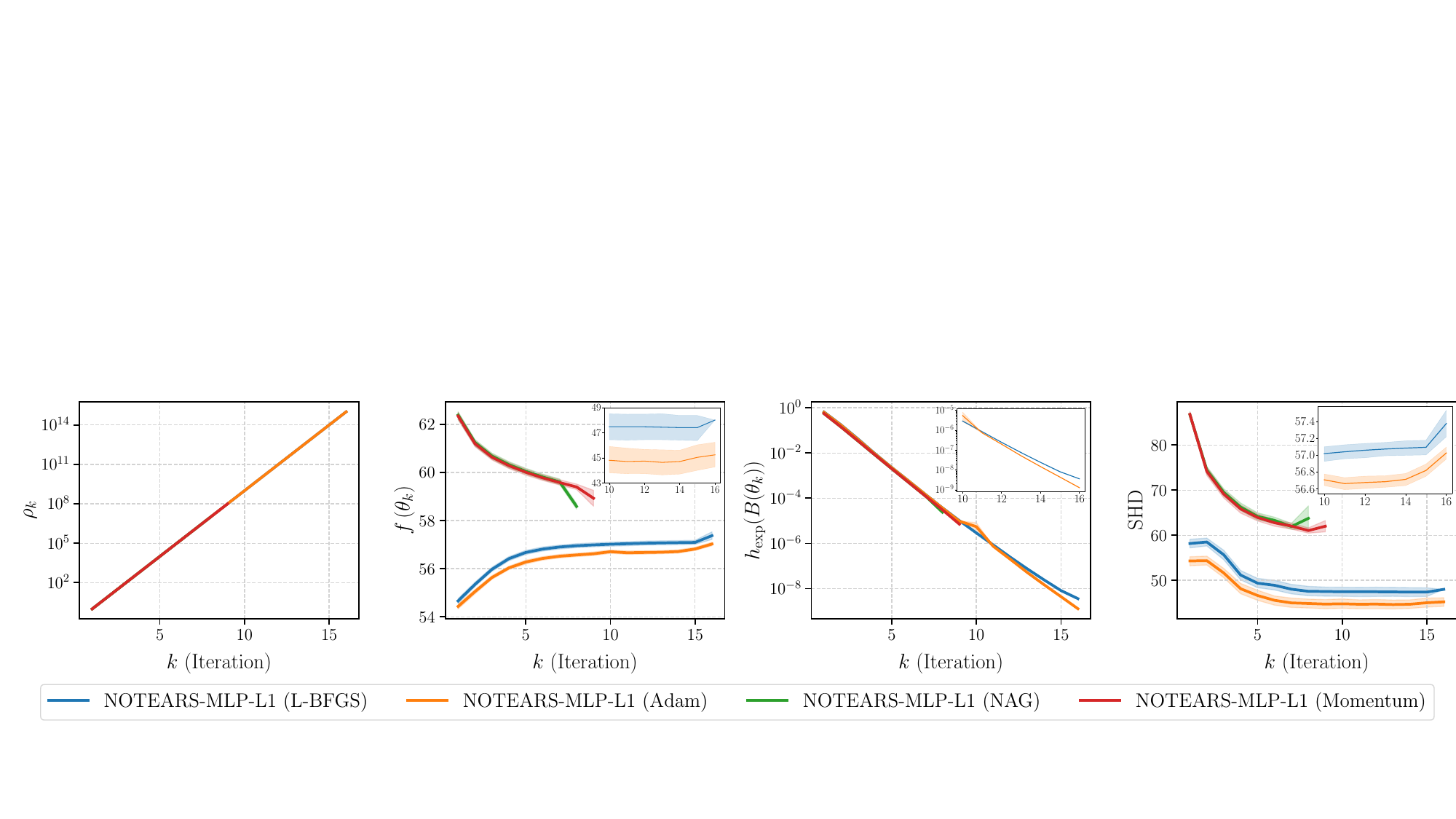}
}
\caption{Optimization processes of different optimization algorithms for solving the QPM subproblems of NOTEARS and NOTEARS-MLP-L1 on synthetic data. The ground truths are $100$-node ER1 graphs and the sample size is $n=1000$. Each data point corresponds to the $k$-th iteration. Shaded area denotes standard errors over $30$ trials. The blue line overlaps with the orange line in the first panel.}
\label{fig:optimizer_optz_process}
\end{figure}

\begin{figure}[!t]
\centering
\subfloat[NOTEARS with ER1 graphs.]{
  \includegraphics[width=0.97\textwidth]{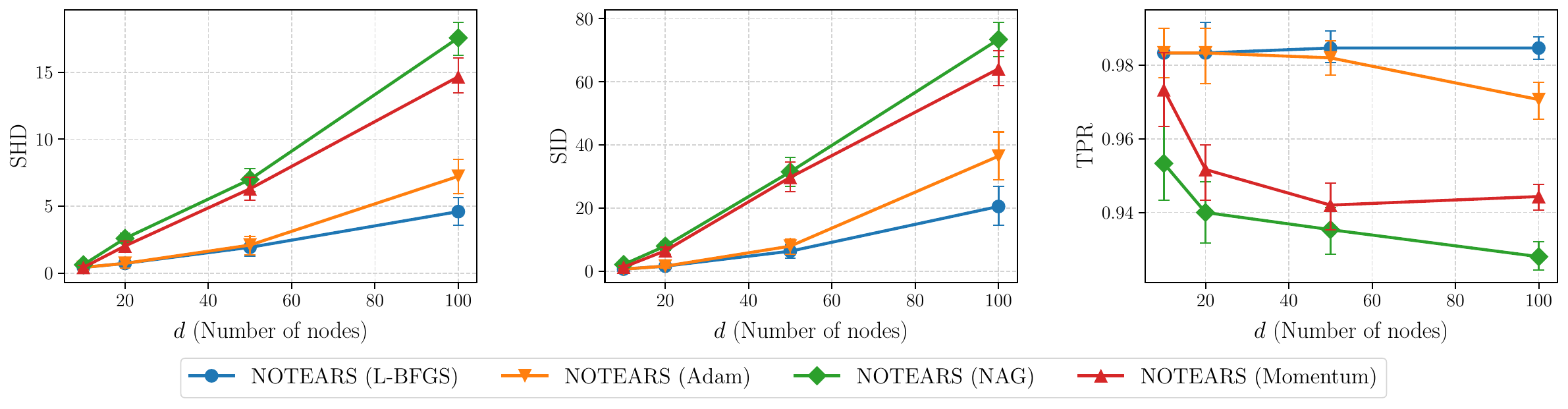}
} \\
\subfloat[NOTEARS with SF4 graphs.]{
  \includegraphics[width=0.97\textwidth]{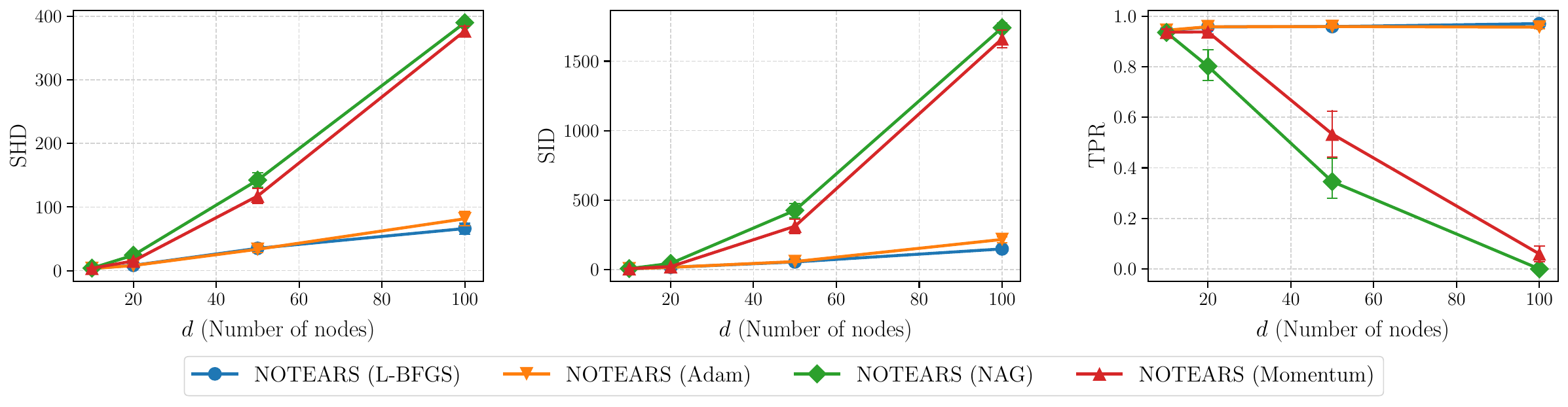}
} \\
\subfloat[NOTEARS-MLP-L1 with ER1 graphs.]{
  \includegraphics[width=0.97\textwidth]{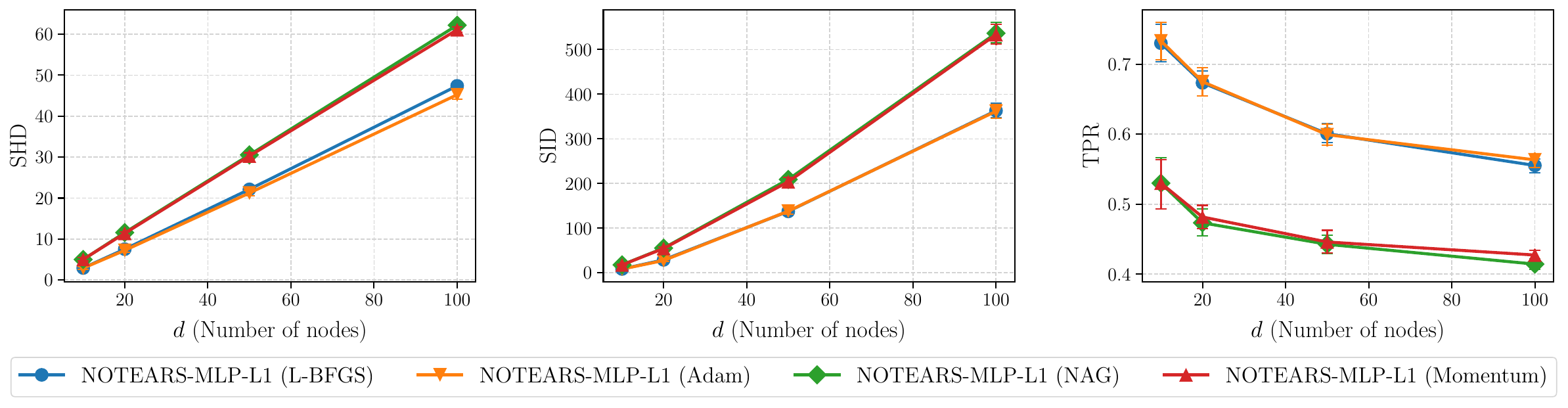}
} \\
\subfloat[NOTEARS-MLP-L1 with SF4 graphs.]{
  \includegraphics[width=0.97\textwidth]{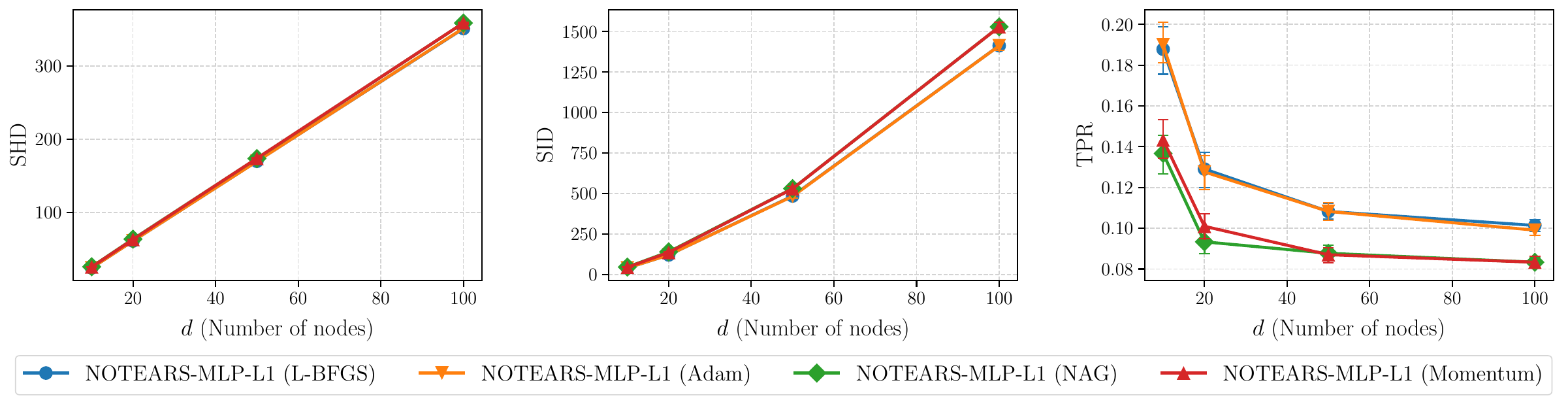}
}
\caption{Empirical results of different optimization algorithms for solving the QPM subproblems of NOTEARS and NOTEARS-MLP-L1 on synthetic data. The sample size is $n=1000$. Lower is better, except for TPR. Error bars denote standard errors over $30$ trials.}
\label{fig:optimizer_results_2}
\end{figure}

\begin{figure}[!t]
\centering
\subfloat[ER1 graphs.]{
  \includegraphics[width=0.97\textwidth]{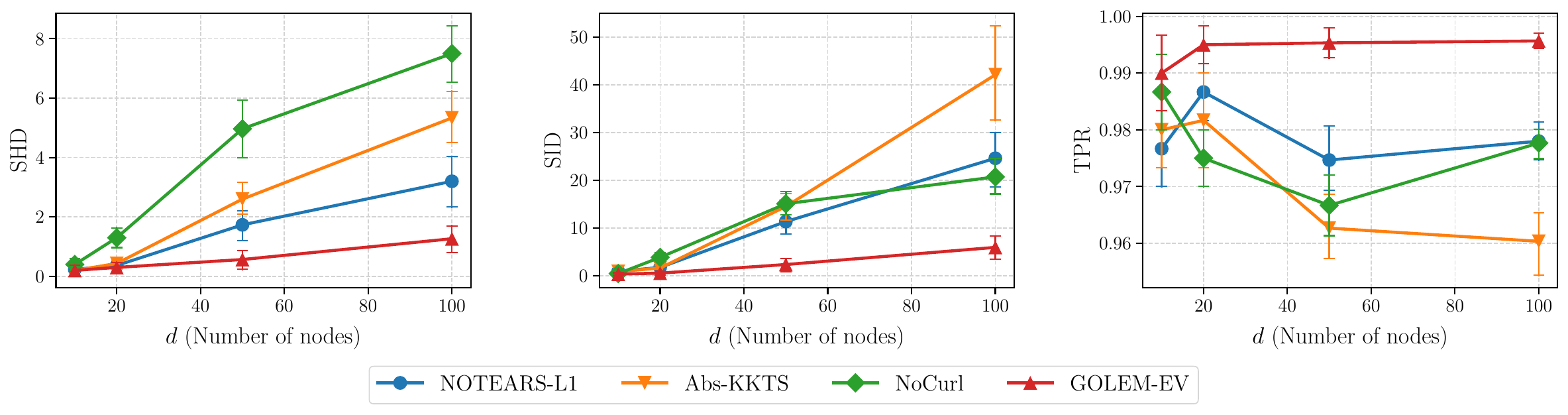}
} \\
\subfloat[SF4 graphs.]{
  \includegraphics[width=0.97\textwidth]{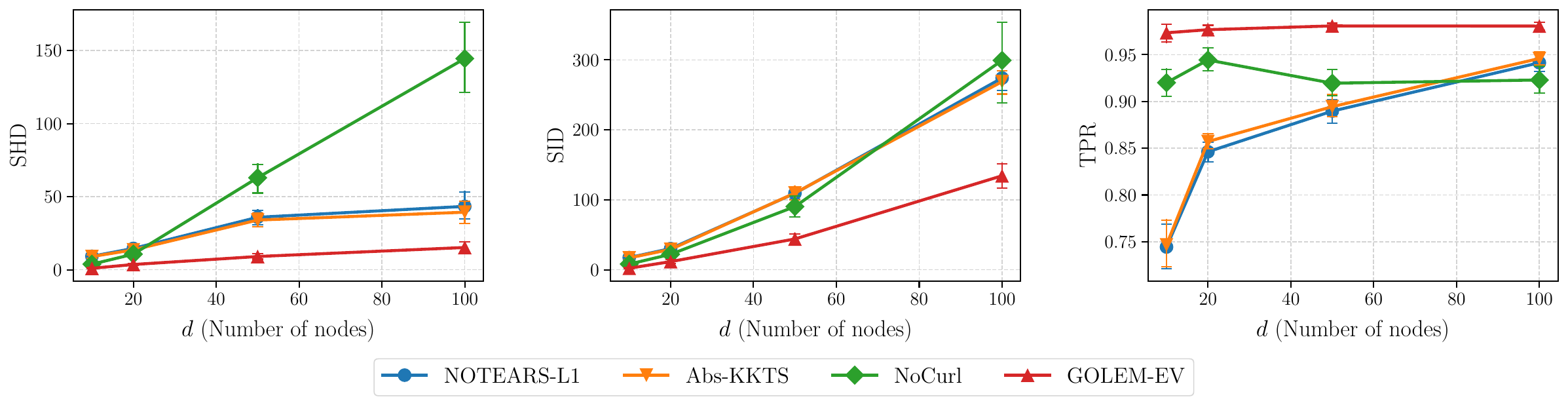}
}
\caption{Empirical results of different structure learning methods on synthetic data with sample size $n=1000$. Lower is better, except for TPR. Error bars denote standard errors over $30$ trials.}
\label{fig:with_baselines_1000_samples}
\end{figure}

\begin{figure}[!t]
\centering
\subfloat[ER1 graphs.]{
  \includegraphics[width=0.97\textwidth]{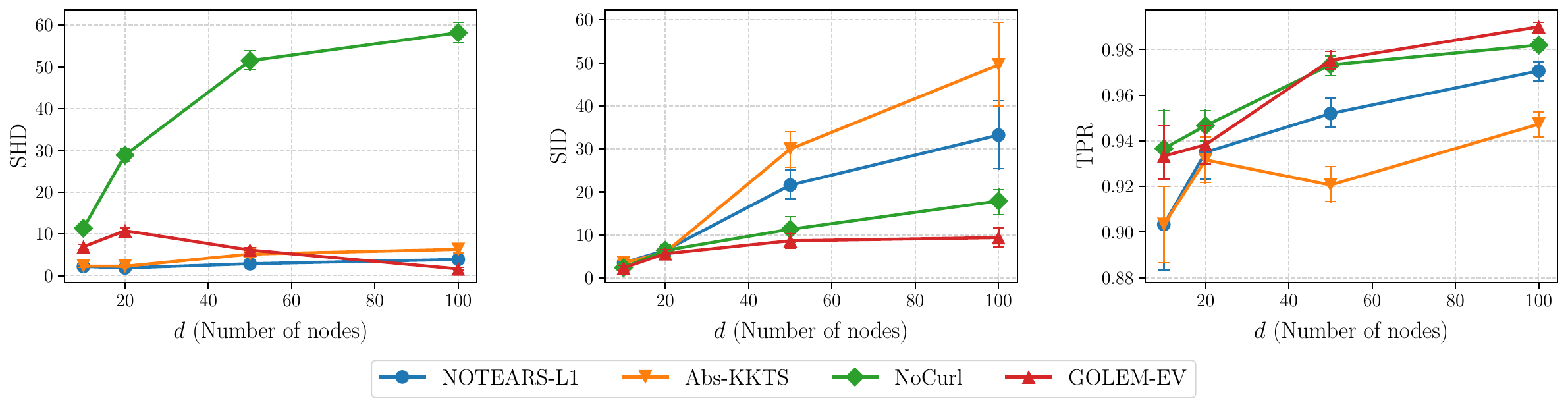}
} \\
\subfloat[SF4 graphs.]{
  \includegraphics[width=0.97\textwidth]{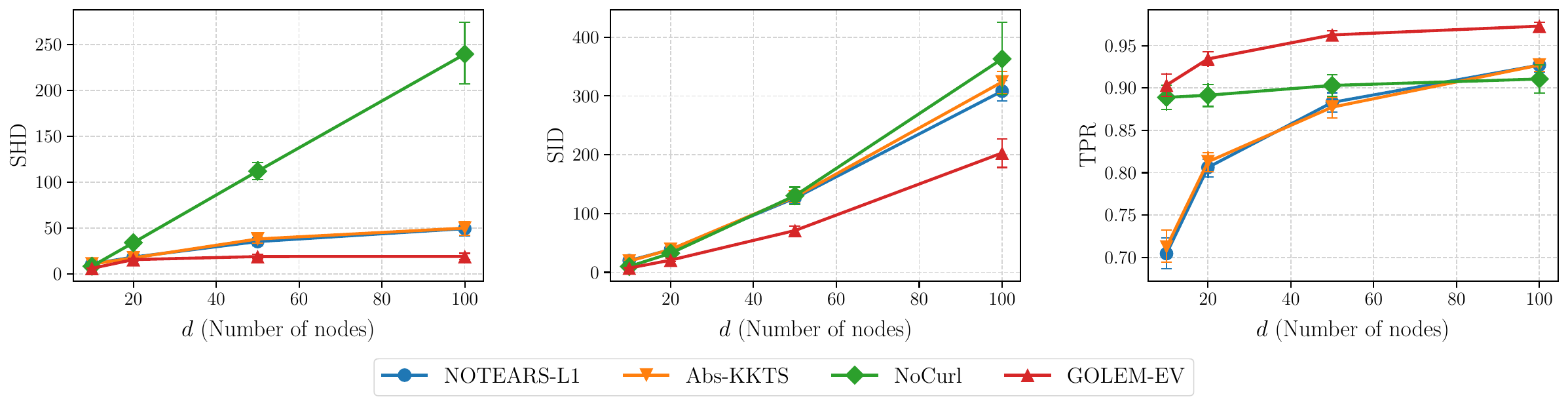}
}
\caption{Empirical results of different structure learning methods on synthetic data with sample size $n=3d$. Lower is better, except for TPR. Error bars denote standard errors over $30$ trials.}
\label{fig:with_baselines_3d_samples}
\end{figure}

\end{document}